\documentclass[]{bytedance_seed}

\usepackage{pifont}
\newcommand{\cmark}{\ding{51}}
\newcommand{\xmark}{\ding{55}}
\usepackage[utf8]{inputenc}
\usepackage{amsfonts}
\usepackage{amsmath}
\usepackage{amssymb}
\usepackage{amsthm}
\usepackage{nicefrac}
\usepackage{makecell}
\usepackage{colortbl}
\definecolor{rowgray}{gray}{0.90}
\usepackage{algorithm}
\usepackage{algpseudocode}
\usepackage{float}
\usepackage{wrapfig}
\usepackage{needspace}
\usepackage{comment}

\microtypesetup{expansion=false}
\titleformat*{\paragraph}{\normalfont\bfseries}
\DeclareRobustCommand{\seedtextit}[1]{{\normalfont\itshape #1}}

\hypersetup{
  pdftitle={TriTD: Tri-Partite Trajectory-Distribution Distillation for Real-Time Autoregressive Video Generation},
  pdfauthor={Jia Li, Xurui Peng, Haowei Zhu, Tianyu Zhao, Jiexi Wang, Youwei Zheng, Yuxi Ren, Xinglong Wu, Xing Wang, Hayden Kwok-Hay So}
}

\newtheorem{lemma}{Lemma}
\newtheorem{corollary}{Corollary}

\title{TriTD: Tri-Partite Trajectory\textendash{}Distribution Distillation for Real-Time Autoregressive Video Generation}

\author[1,\ast]{Jia Li}
\author[2,\ast]{Xurui Peng}
\author[3]{Haowei Zhu}
\author[2]{Tianyu Zhao}
\author[2]{Jiexi Wang}
\author[2]{Youwei Zheng}
\author[2]{Yuxi Ren}
\author[2]{Xinglong Wu}
\author[2,\dagger]{Xing Wang}
\author[1,\dagger]{Hayden Kwok-Hay So}

\affiliation[1]{The University of Hong Kong}
\affiliation[2]{ByteDance Seed}
\affiliation[3]{Tsinghua University}

\contribution[\ast]{Equal contribution}
\contribution[\dagger]{Corresponding authors.}

\abstract{
Autoregressive (AR) video diffusion has emerged as a leading paradigm for efficient video generation, yet its practical deployment remains bottlenecked by the trade-off between generation quality and inference efficiency. We attribute this bottleneck to three error sources in AR rollouts: the base model error, the train--inference distribution mismatch, and the upstream error propagation. Building on this decomposition, we derive closed-form upper bounds on the per-step and cumulative errors, providing a principled basis for jointly optimizing generation quality and inference efficiency. Guided by this analysis, we propose \textbf{TriTD}, a \textbf{Tri}-Partite \textbf{T}rajectory--\textbf{D}istribution co-optimized distillation framework with three complementary modules. First, for ODE initialization, we adopt a Next-Node Trajectory Distillation (NTD) objective, which shortens the supervision horizon and admits a lower regression-error floor. Second, we introduce Chunk-Graded Noising (CGN), a monotonically non-decreasing per-chunk noising schedule that helps narrow the expected train--inference noise mismatch and removes redundant KV-cache updates, yielding substantial speedups at no quality cost. Third, we propose Orthogonal-Parallel DMD (OP-DMD), a distribution-matching loss that retains the original DMD supervision while introducing an additional regression constraint in the orthogonal subspace to suppress off-axis drift, improving both training stability and final generation quality. Extensive experiments show that TriTD consistently surpasses existing baselines in visual quality while reaching a real-time throughput of 26.8 FPS on a single H100 GPU, delivering a balanced quality--efficiency solution for interactive streaming video generation.
}

\begin{document}

\maketitle

\section{Introduction}
\label{sec:introduction}

Video generation has made remarkable progress in recent years \cite{openai2024sora,polyak2025moviegencastmedia,klingteam2025klingomnitechnicalreport,deepmind2025veo3,wan2024wan}, enabling a wide range of applications such as text-to-video synthesis \cite{ma2025stepvideot2vtechnicalreportpractice} and image-to-video animation \cite{blattmann2023stablevideodiffusionscaling, hu2024animateanyoneconsistentcontrollable,hu2025animate2highfidelitycharacter}. Research efforts have increasingly focused on improving the visual quality, inference efficiency, and longer-horizon video generation \cite{li2026trainshortinferencelong, yuan2026heliosrealrealtimelong}. To meet these demands, autoregressive (AR) video diffusion \cite{sun2025ardiffusionasynchronousvideogeneration, chen2024diffusionforcingnexttokenprediction, liu2025rollingforcingautoregressivelong, causalforcing2025} has emerged as a promising paradigm for real-time interactive video generation, owing to its causal, chunk-wise decoding. While recent works such as Self-Forcing \cite{huang2024selfforcing} have made notable progress, the trade-off between generation quality and inference efficiency remains unresolved.

First, we analyze AR video generation through the lens of error propagation. We decompose the per-chunk inference error into three sources: the base model error, the train--inference distribution mismatch, and the upstream error propagation, and derive a closed-form upper bound on the cumulative error along the generated sequence. This decomposition reveals two underexplored optimization directions: lowering the base model error from its source, and reducing the context noise distribution mismatch between training and inference.

Motivated by these insights, we propose \textbf{TriTD}, a \textbf{Tri}-Partite \textbf{T}rajectory--\textbf{D}istribution co-optimized distillation framework for AR video generation. It comprises three complementary modules. (1) \seedtextit{Next-Node Trajectory Distillation (NTD).} For ODE initialization, we replace conventional endpoint consistency distillation with next-node supervision under an L1 regression loss, improving the initial trajectory fit. (2) \seedtextit{Chunk-Graded Noising (CGN).} To reduce the train--inference distribution mismatch, we introduce a noising scheme that enforces a monotonically non-decreasing training-time noise level across chunks, applied consistently across both training stages. This schedule brings training context noise closer to inference conditions and removes the redundant per-chunk KV Cache update at inference, reducing per-chunk cost without quality degradation. (3) \seedtextit{Orthogonal-Parallel Distribution Matching Distillation (OP-DMD)}. To stabilize distribution-level training, we augment the standard DMD objective with a regression constraint on the residual component orthogonal to the DMD update direction. This auxiliary supervision suppresses off-axis drift, reduces training variance, and leads to more stable optimization and higher final generation quality.

Our main contributions are summarized as follows:
\begin{itemize}
\item We establish a closed-form upper-bound decomposition for the per-chunk and cumulative errors of AR video generation, offering a new analytical perspective on the quality--efficiency trade-off, and provide theoretical motivation for both NTD and CGN within our distillation framework.

\item We propose \textbf{TriTD}, a tri-partite trajectory--distribution co-optimized distillation framework integrating Next-Node Trajectory Distillation (NTD), Chunk-Graded Noising (CGN), and Orthogonal-Parallel DMD (OP-DMD), whose complementary modules jointly improve generation quality and inference efficiency.

\item Extensive experiments demonstrate that TriTD consistently outperforms state-of-the-art baselines while achieving a real-time throughput of 26.8 FPS on a single H100 GPU, offering a practical solution for real-time interactive video generation.
\end{itemize}

\section{Related work}
\label{sec:related}

\paragraph{Diffusion models and rectified flow.} Diffusion models~\citep{ho2020ddpm,song2021scorebased,Rombach_2022_CVPR} have become the mainstream framework for visual generation. Given a clean sample $x_0$ and Gaussian noise $\epsilon \sim \mathcal{N}(0, I)$, the forward process is $x_t = \alpha_t\, x_0 + \sigma_t\, \epsilon$ with noise schedule $\alpha_t, \sigma_t$, and the model $v_\theta(x_t, t)$ is trained by minimizing $\mathbb{E}_{t,\, x_0,\, \epsilon}\!\left[\, \left\lVert v_\theta(x_t,\, t) - v^{\star} \right\rVert_2^{2} \,\right]$.
Rectified Flow~\citep{liu2023rectified, lipman2023flowmatchinggenerativemodeling, esser2024scalingrectifiedflowtransformers} further simplifies the forward process to a linear interpolation  $x_t = (1-t)\, x_0 + t\, \epsilon$ with the corresponding denoising ODE $\frac{\mathrm{d}x_t}{\mathrm{d}t} = v_\theta(x_t,\, t)$, and has become the backbone of many modern generative models \cite{wu2025qwenimagetechnicalreport, hacohen2024ltxvideo, hacohen2026ltx2efficientjointaudiovisual}.

\paragraph{Efficient diffusion inference.} Beyond reducing sampling steps, a broad family of training-free methods accelerates pretrained diffusion models by reusing temporally redundant features or attention outputs, selectively updating tokens, and pruning redundant conditional computation~\citep{ma2024deepcache,zhao2024pab,zou2024toca,liu2025teacache,Fu_2025_ICCV,zhu2026tap}. These methods reduce the total computation along the denoising trajectory without learning a few-step generator and are therefore complementary to diffusion distillation.

\paragraph{Diffusion model distillation.} Iterative denoising remains a core efficiency bottleneck of diffusion models, motivating two complementary distillation paradigms. (1) \textbf{Trajectory distillation} aligns the student to the teacher denoising ODE. Progressive Distillation~\citep{salimans2022progressive} progressively halves the number of sampling steps via cascaded teacher-student rounds. Consistency Models~\citep{song2023consistency} impose the constraint directly on the ODE trajectory, enforcing that model outputs at any two points on the same trajectory remain consistent. Targeting the linear ODE structure of Rectified Flow, InstaFlow~\citep{liu2024instaflow} and SlimFlow~\citep{zhu2024slimflow} further shorten the effective trajectory length via reflow and re-distillation, unlocking the potential of few-step generation. Trajectory distillation provides fine-grained, per-node supervision on the denoising path and exhibits a \seedtextit{mode-covering} (zero-avoiding) behavior. (2) \textbf{Distribution Matching Distillation}~\citep{yin2024dmd} minimizes the reverse-KL divergence between the student distribution $p_\phi$ and the teacher distribution $p_{\text{tea}}$ for efficient distillation and has been widely used to accelerate both diffusion and flow-matching models. DMD2~\citep{yin2024dmd2} and subsequent works further introduce GAN-style discriminators to sharpen high-frequency details. As a reverse-KL objective, DMD is \seedtextit{mode-seeking} (zero-forcing) and provides weak per-node supervision. Moreover, in autoregressive video generation, the training-time context distribution from DMD self-rollout typically diverges from the inference-time context, inducing a non-trivial train--inference gap.

\paragraph{Distilled autoregressive video generation.} Recently, causal AR diffusion has emerged as a popular research direction in video generation, performing chunk-by-chunk sequential generation with block-causal attention~\citep{sun2025ardiffusionasynchronousvideogeneration,chen2024diffusionforcingnexttokenprediction,liu2025rollingforcingautoregressivelong,skyreels2025,magi12025,yuan2026heliosrealrealtimelong}. Among distillation-based approaches, CausVid~\citep{yin2024causvid} combines few-step distillation with block-causal AR decoding, Self-Forcing~\citep{huang2024selfforcing} simulates the AR self-rollout process during training to alleviate exposure bias, and Causal Forcing~\citep{causalforcing2025} further strengthens generation dynamics with an AR teacher. Long-horizon inference introduces additional challenges from accumulated rollout errors and an increasingly expensive context cache. Recent work explores training-free horizon extension, context compression, and memory-efficient rollout mechanisms~\citep{li2026trainshortinferencelong,fu2026ispa,yuan2026heliosrealrealtimelong}. These inference-time techniques are broadly applicable to and complementary with existing chunk-wise AR video generators, including CausVid, Self-Forcing, and Causal-Forcing. However, existing distilled AR methods do not explicitly model the inter-chunk context-noise distribution and typically require an additional forward pass per chunk to refresh the KV cache.

\section{Method}
\label{sec:method}

\subsection{Preliminary}
\label{sec:preliminary}

We adopt the chunk-wise autoregressive paradigm~\citep{yin2024causvid,huang2024selfforcing},
partitioning a video into $C$ equal-length chunks
$\{X^{(1)}, X^{(2)}, \ldots, X^{(C)}\}$ and factorizing the joint distribution as
\begin{equation}
p\!\left(X^{(1:C)}\right)
= \prod_{c=1}^{C} p\!\left(X^{(c)} \,\big|\, X^{(<c)}\right).
\label{eq:joint-factorization}
\end{equation}
Each conditional $p\!\left(X^{(c)} \mid X^{(<c)}\right)$ is modeled by a few-step
student $\Phi_{\theta}$. For chunk $c$, start from pure noise
$X^{(c)}_{t_1}\sim\mathcal{N}(0,I)$ and denoise at $K$ descending timesteps
$t_1=1000>\cdots>t_K>t_{K+1}=0$. Each update ($k=1,\ldots,K$)
is conditioned on the historical context $X^{(<c)}$:
\begin{equation}
X^{(c)}_{t_{k+1}}
= \Phi_{\theta}\!\left(X^{(c)}_{t_{k}},\, t_{k};\, X^{(<c)}\right).
\end{equation}
\paragraph{Historical context as condition.} To avoid recomputing historical chunks, the student adopts block-causal
attention and injects the historical context with KV Cache
$\mathcal{K}^{(<c)}$. Let $t^{c}_{\mathrm{exit}} \in \{t_{1}, \ldots, t_{K}\}$
denote the actual exit timestep of the denoising chain for the $c$-th chunk,
and let $\hat{X}^{(c)}_{0}$ denote the one-step prediction obtained by mapping
$X^{(c)}_{t^{c}_{\mathrm{exit}}}$ directly to $t = 0$ via $\Phi_{\theta}$.
Denoting the KV-cache update operator as $\Psi$, at baseline inference an additional forward pass is performed after the denoising chain terminates
with $\hat{X}^{(c)}_{0}$ at $t = 0$ to refresh the KV cache:
\begin{equation}
\mathcal{K}^{(\leq c)}
= \Psi\!\left(\hat{X}^{(c)}_{0},\, t = 0;\, \mathcal{K}^{(<c)}\right).
\label{eq:kv-update}
\end{equation}
When $t^{c}_{\mathrm{exit}} \neq t_{K}$, the denoising chain is incomplete and
$\hat{X}^{(c)}_{0}$ deviates from the ideal $X^{(c)}_{0}$ by a residual error
correlated with $t^{c}_{\mathrm{exit}}$, which in turn affects the noise level
and distribution of the KV Cache that conditions subsequent chunks. The
original Self-Forcing~\citep{huang2024selfforcing} samples
$t^{c}_{\mathrm{exit}}$ uniformly at random during training for efficiency, so
the full denoising chain is completed only with probability $1/K$, whereas at
inference the full chain is always executed, i.e.,
$t^{c}_{\mathrm{exit}} = t_{K}$. This systematic mismatch in the distribution
of $t^{c}_{\mathrm{exit}}$ between training and inference constitutes one of
the sources of the train-inference distribution mismatch.

\subsection{Cumulative error bound for autoregressive video diffusion}
\label{sec:error-bound}

In this section, we establish a closed-form upper bound on the cumulative
error of AR video generation under an idealized error model, providing theoretical motivation for our design. For analytical convenience, let
$\sigma_{\mathrm{tr}}(c)$ and $\sigma_{\mathrm{inf}}(c)$ denote the
training-time and inference-time context noise levels of the $c$-th chunk,
respectively, modeled as monotonically related to the exit timesteps
$t^{c}_{\mathrm{exit,tr}}$ and $t^{c}_{\mathrm{exit,inf}}$ defined in
Section~\ref{sec:preliminary}, where a larger exit timestep induces a larger residual prediction error and a larger $\sigma$.

We observe that the single-step inference error $e_{c}$ at the $c$-th chunk
admits a three-way decomposition: (i) the \seedtextit{base model error} $\varepsilon$,
 a uniform bound on the model's residual error under matched, error-free context in this analysis; (ii) the
 \seedtextit{train--inference distribution mismatch} arising from the gap
$|\sigma_{\mathrm{inf}}(c) - \sigma_{\mathrm{tr}}(c)|$ between the
inference-time and training-time context noise levels; and (iii) the
\seedtextit{upstream error propagation} term inherited from the cumulative error
of the previous $c\!-\!1$ chunks via the KV Cache. The following lemma
formalizes this decomposition.

\begin{lemma}[Single-step error upper bound]
\label{lem:single-step}
Assume the matched-context residual is bounded by $\varepsilon$, noise-level perturbations contribute at most $\kappa|\sigma_{\mathrm{inf}}(c)-\sigma_{\mathrm{tr}}(c)|$, and the upstream context-error contribution is bounded by $Le_{c-1}$, with $e_0=0$. Then
\begin{equation}
\label{eq:single-step}
e_c \le \varepsilon + \kappa \cdot |\sigma_{\mathrm{inf}}(c) - \sigma_{\mathrm{tr}}(c)| + L \cdot e_{c-1},
\qquad L \ge 0,\; \kappa \ge 0,
\end{equation}
where $\kappa$ scales the distribution-mismatch contribution and $L$ is the Lipschitz constant of $\Phi_{\theta}$.
\end{lemma}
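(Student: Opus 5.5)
The bound follows from the triangle inequality once the per-chunk error is written as a telescoping sum of three differences, each controlled by one of the three hypotheses. Let $X^{(c),\star}_0$ be the ideal output of chunk $c$, produced with the true context $X^{(<c),\star}$ at the training noise level. Let $\hat X^{(c)}_0$ be the realized inference output, produced with the perturbed context $\hat X^{(<c)}$ at noise level $\sigma_{\mathrm{inf}}(c)$. The error is then $e_c=\lVert \hat X^{(c)}_0 - X^{(c),\star}_0\rVert$. Write $\Phi_\theta(\,\cdot\,;\mathcal{C},\sigma)$ for the full $K$-step denoising map for chunk $c$ given context $\mathcal{C}$ whose cache was formed at noise level $\sigma$. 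Two intermediate quantities sit between the two outputs:
\begin{equation*}
A=\Phi_\theta\bigl(X^{(c)}_{t_1};X^{(<c),\star},\sigma_{\mathrm{tr}}(c)\bigr),\qquad
B=\Phi_\theta\bigl(X^{(c)}_{t_1};X^{(<c),\star},\sigma_{\mathrm{inf}}(c)\bigr).
\end{equation*}
Here $A$ is the output under matched, error-free context, and $B$ is the output with the correct context but the inference-time noise level.

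The proof proceeds in three steps.
\begin{enumerate}
\item Insert the intermediates and apply the triangle inequality:
\begin{equation*}
e_c\le \lVert A-X^{(c),\star}_0\rVert+\lVert B-A\rVert+\lVert \hat X^{(c)}_0-B\rVert .
\end{equation*}
\item Bound the first two terms directly. The first term is at most $\varepsilon$ by the matched-context residual assumption. The second term isolates the change in noise level alone, so it is at most $\kappa|\sigma_{\mathrm{inf}}(c)-\sigma_{\mathrm{tr}}(c)|$ by the noise-perturbation assumption.
\item Bound the third term. $\hat X^{(c)}_0$ and $B$ share the noise level $\sigma_{\mathrm{inf}}(c)$ and differ only in the context argument. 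By the Lipschitz property of $\Phi_\theta$ in its context (through the cache operator $\Psi$), the term is at most $L$ times the context discrepancy, which is the accumulated upstream error $e_{c-1}$.
\end{enumerate}
Summing the three bounds gives \eqref{eq:single-step}.

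For the first chunk, $e_0=0$ because there is no upstream context. The third term then vanishes, and the bound holds trivially at $c=1$. An induction on $c$ confirms that each $e_{c-1}$ is a well-defined finite quantity, so the recursion is meaningful for all $c$.

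The main obstacle is step 3: making the phrase ``context discrepancy equals $e_{c-1}$'' precise. The context for chunk $c$ contains all previous chunks, not just chunk $c-1$. To close this cleanly, I will interpret $e_{c-1}$ as the cumulative error of the first $c-1$ chunks, for example the norm over the whole context $\lVert \hat X^{(<c)}-X^{(<c),\star}\rVert$, as the surrounding text indicates ("cumulative error of the previous $c-1$ chunks"). I will state $L$ as the Lipschitz constant of the map from context to output, taken uniformly in the noise level. Under this idealized reading, each step is a direct application of one hypothesis, and the lemma is essentially definitional.
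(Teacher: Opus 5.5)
Your proposal is correct and follows the paper's argument. The paper likewise separates the matched-context residual, the noise-level shift, and the upstream context perturbation, applies the triangle inequality, and bounds each piece by its hypothesis; your intermediates $A$ and $B$ just make that decomposition explicit. For step 3 you do not need to reinterpret $e_{c-1}$ as a whole-context error, which would clash with your per-chunk definition of $e_c$: the lemma's third hypothesis grants the bound $L\,e_{c-1}$ directly, and that is exactly how the paper uses it.
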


\begin{proof}[Proof sketch]
Apply the triangle inequality to the matched-context residual, noise-level perturbation, and upstream context perturbation. The stated residual and Lipschitz-type bounds give Eq.~\ref{eq:single-step}.
\end{proof}

Recursively unrolling Lemma~\ref{lem:single-step} along the chunk axis and
summing yields the cumulative bound.

\begin{corollary}[Cumulative error bound]
\label{cor:cumulative}
The total error $E_{C} = \sum_{c=1}^{C} e_{c}$
satisfies
\begin{equation}
E_{C} \;\le\; \sum_{c=1}^{C}
\underbrace{\left( \frac{1 - L^{C-c+1}}{1 - L} \right)}_{w_{c}}
\Big( \varepsilon
\;+\; \kappa \cdot \big| \sigma_{\mathrm{inf}}(c) - \sigma_{\mathrm{tr}}(c) \big|
\Big),
\label{eq:cumulative-bound}
\end{equation}
where the position-dependent weight $w_{c} = (1 - L^{C-c+1})/(1 - L)$ captures
error amplification along the autoregressive chain: errors introduced at
smaller $c$ propagate longer and therefore receive larger weight. The complete derivation, together with the corresponding closed forms of $w_c$ for $L=1$ and $L>1$ and a discussion of their effect on the bound, is given in Appendix~\ref{app:proof-total}.
\end{corollary}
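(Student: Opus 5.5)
We unroll the recursion of Lemma~\ref{lem:single-step}, then swap the order of summation. Write $b_c := \varepsilon + \kappa\,|\sigma_{\mathrm{inf}}(c)-\sigma_{\mathrm{tr}}(c)|$ for the per-chunk source term, so the lemma reads $e_c \le b_c + L\,e_{c-1}$ with $e_0=0$.

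The first step is to show by induction on $c$ that
\begin{equation*}
e_c \;\le\; \sum_{j=1}^{c} L^{\,c-j}\, b_j .
\end{equation*}
The base case $c=1$ is immediate from $e_0=0$. For the inductive step, substitute the hypothesis for $e_{c-1}$ into the lemma. This uses $L\ge 0$, which guarantees that multiplying the upper bound on $e_{c-1}$ by $L$ preserves the inequality.

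The second step is to sum this over $c=1,\dots,C$ and exchange the order of summation:
\begin{equation*}
E_C \;\le\; \sum_{c=1}^{C}\sum_{j=1}^{c} L^{\,c-j} b_j \;=\; \sum_{j=1}^{C} b_j \sum_{c=j}^{C} L^{\,c-j} \;=\; \sum_{j=1}^{C} b_j \sum_{m=0}^{C-j} L^{m}.
\end{equation*}
The inner geometric sum has $C-j+1$ terms. For $L\neq 1$ it equals $(1-L^{C-j+1})/(1-L)$, which is exactly $w_j$. Relabeling $j\to c$ then gives Eq.~\ref{eq:cumulative-bound}.

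The only real care is the degenerate case $L=1$, where the stated formula for $w_c$ is read as its limit, namely $w_c = C-c+1$ (the geometric sum directly). For $L>1$ the same expression remains valid, since numerator and denominator are both negative; equivalently $w_c = (L^{C-c+1}-1)/(L-1)$. For $L<1$ one can additionally note $w_c \le 1/(1-L)$. Finally, the claim that errors at smaller $c$ receive larger weight follows because $w_c=\sum_{m=0}^{C-c}L^m$ is a sum of nonnegative terms with more terms for smaller $c$, so $w_c$ is non-increasing in $c$. I expect no substantive obstacle beyond stating these cases cleanly.
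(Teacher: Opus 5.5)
Your proposal is correct and follows essentially the same route as the paper's proof in Appendix~\ref{app:proof-total}: unroll the single-step bound to $e_c\le\sum_{i=1}^{c}L^{c-i}\varepsilon_i$, sum over chunks, exchange the order of summation, and evaluate the geometric series, reading the $L=1$ case as $w_c=C-c+1$. Your explicit induction, which notes where $L\ge 0$ is needed, is a tidier version of the paper's informal unrolling.
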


The upper bound in \Cref{eq:cumulative-bound} decomposes into two terms:
\begin{equation}
\underbrace{T \;=\; \sum_{c=1}^{C} w_{c} \cdot \varepsilon}_{\text{Model-error term}},
\qquad
\underbrace{N \;=\; \sum_{c=1}^{C} w_{c} \cdot \kappa \cdot \big| \sigma_{\mathrm{inf}}(c) - \sigma_{\mathrm{tr}}(c) \big|}_{\text{Noise-mismatch term}},
\label{eq:bound-split}
\end{equation}

which jointly characterize the optimization surface of autoregressive video
distillation. 

\subsection{TriTD: a tri-partite trajectory-distribution framework}
\label{sec:framework}

Guided by this analysis, we propose
\textbf{TriTD}, a tri-partite trajectory-distribution co-optimization
framework for autoregressive video distillation. As illustrated in
\Cref{fig:framework}, TriTD (i) reshapes the training objective of ODE-initialization stage with Next-Node Trajectory Distillation (NTD) to reduce the base model error $\varepsilon$; (ii) introduces a chunk-graded noising training strategy that shrinks the train-inference noise distribution mismatch; and (iii) adopts the Orthogonal-Parallel DMD (OP-DMD) loss to mitigate unsupervised off-axis drift and training oscillation. The regression-error analysis motivates NTD's next-node supervision, while the expected mismatch reduction motivates Chunk-Graded Noising. OP-DMD complements these components with distribution-level optimization control.

\begin{figure}[t]
  \centering
  \includegraphics[width=0.99\linewidth]{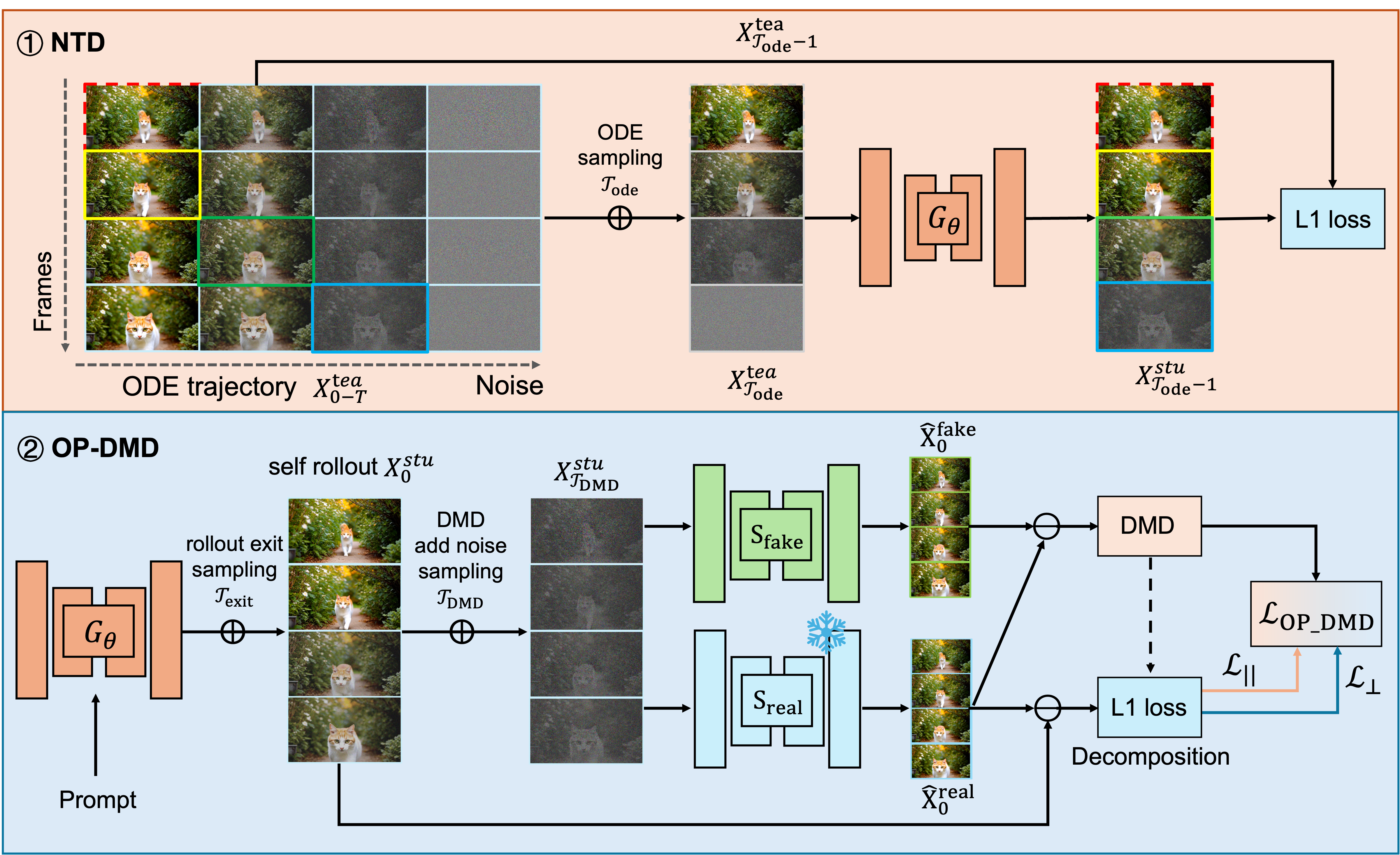}
    \vspace{-1mm}
  \caption{Overview of the proposed TriTD Framework, including (1) NTD-based ODE initialization stage and (2) OP-DMD-based distribution distillation stage. Chunk-Graded noising training is consistently applied throughout both stages. Frames enclosed by solid boxes of the same color form paired student-teacher loss, while frames in red dashed boxes are excluded from loss computation.}
  \vspace{-4mm}
  \label{fig:framework}
\end{figure}

\subsubsection{NTD: tightening the base-model error \texorpdfstring{$\varepsilon$}{epsilon}}
\label{sec:ntd}

Existing ODE-initialization methods typically adopt an
\seedtextit{endpoint-consistency} objective: at any intermediate timestep
$t_{i}$, student $\Phi_{\theta}$ is required to regress the
teacher's clean prediction $x^{\star}_{0}$ at $t=0$:
\begin{equation}
\mathcal{L}_{\mathrm{ode\text{-}init}}
\;=\;
\mathbb{E}_{c,\,i}\!\left[\,
\bigl\|\,\Phi_{\theta}(x_{t_{i}})-x^{\star}_{0}\,\bigr\|^{2}_{2}
\right].
\label{eq:endpoint-l2}
\end{equation}

We propose to replace the endpoint target with the \seedtextit{next node} prediction  $x_{t_{i+1}}^{\star}$ on the teacher trajectory and adopt $L_{1}$ instead of $L_{2}$. The \seedtextit{Next-Node Trajectory Distillation} (NTD) loss is formulated as:
\begin{equation}
\mathcal{L}_{\mathrm{NTD}}
=
\mathbb{E}_{c,\,i}\!\left[\,
\bigl\|\,\Phi_{\theta}(x_{t_{i}})-x^{\star}_{t_{i+1}}\,\bigr\|_{1}
\right].
\label{eq:ntd-l1}
\end{equation}
Whereas \cref{eq:endpoint-l2} regresses the endpoint with MSE,
NTD uses the next saved teacher ODE node and an outlier-robust $L_1$
loss. To motivate this choice, we analyze the regression-error floors
associated with the two objectives.

\begin{lemma}[Comparison of regression-error floors]
\label{lem:ntd-bound}
Assume a locally affine teacher target relation and coordinate-wise
Gaussian endpoint conditionals, as specified in
Appendix~\ref{app:lem-ntd}. At a fixed node $i$, let
$\varepsilon^{\mathrm{lower}}_{\mathrm{init},i}$ be the average
coordinate-wise optimal conditional RMSE for endpoint prediction, and
$\varepsilon^{\mathrm{lower}}_{\mathrm{NTD},i}$ the corresponding
optimal MAE for next-node prediction. Then
\begin{equation}
\varepsilon^{\mathrm{lower}}_{\mathrm{NTD},i}
=\lambda_i\sqrt{\tfrac{2}{\pi}}\,
\varepsilon^{\mathrm{lower}}_{\mathrm{init},i}
<\varepsilon^{\mathrm{lower}}_{\mathrm{init},i},
\label{eq:ntd-bound}
\end{equation}
where $\lambda_i=1-t_{i+1}/t_i\in[0,1]$ and the strict inequality
requires $\varepsilon^{\mathrm{lower}}_{\mathrm{init},i}>0$.
Averaging over nodes also preserves the strict comparison whenever
the average endpoint floor is positive.
\end{lemma}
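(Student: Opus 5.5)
The plan is to make the appendix assumptions explicit and reduce both floors to statistics of one coordinate-wise Gaussian. Under rectified flow, $x_t=(1-t)x_0+t\epsilon$, so the teacher ODE node at $t_{i+1}$ is an affine combination of the current state and the clean endpoint. Eliminating $\epsilon$ gives
\begin{equation*}
x^{\star}_{t_{i+1}}=\tfrac{t_{i+1}}{t_i}\,x_{t_i}+\bigl(1-\tfrac{t_{i+1}}{t_i}\bigr)x^{\star}_0
= (1-\lambda_i)\,x_{t_i}+\lambda_i\,x^{\star}_0 .
\end{equation*}
This is the locally affine target relation, with timesteps rescaled to $[0,1]$. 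Conditional on $x_{t_i}$, the first term is deterministic. Hence the conditional law of each coordinate of the next-node target is the endpoint conditional, scaled by $\lambda_i$ and shifted by a known amount. I will take the appendix assumption to be that each coordinate $x^{\star}_{0,j}\mid x_{t_i}\sim\mathcal N(\mu_j,s_j^2)$.

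The first step handles the endpoint objective. The MSE-optimal predictor is the conditional mean, and its coordinate-wise conditional RMSE is $s_j$. Therefore $\varepsilon^{\mathrm{lower}}_{\mathrm{init},i}$ equals the coordinate average of $s_j$; the average is taken over coordinates and over $x_{t_i}$.

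The second step handles the next-node objective. The MAE-optimal predictor is the conditional median. Each coordinate of $x^{\star}_{t_{i+1}}\mid x_{t_i}$ is distributed as $\mathcal N\bigl((1-\lambda_i)x_{t_i,j}+\lambda_i\mu_j,\ \lambda_i^2 s_j^2\bigr)$, so its median is its mean. The optimal MAE is then the mean absolute deviation of a centered Gaussian, $\mathbb E|Z|=\sigma\sqrt{2/\pi}$, which here equals $\lambda_i s_j\sqrt{2/\pi}$. Averaging over the same coordinates and contexts, and pulling out the constant $\lambda_i\sqrt{2/\pi}$ by linearity, gives the equality in Eq.~\ref{eq:ntd-bound}.

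Strictness is immediate. Since $0\le t_{i+1}<t_i$, we have $\lambda_i\in(0,1]$. Combined with $\sqrt{2/\pi}<1$, this gives $\lambda_i\sqrt{2/\pi}<1$, and multiplying by a positive $\varepsilon^{\mathrm{lower}}_{\mathrm{init},i}$ gives the strict inequality. For the node average, note that
\begin{equation*}
\sum_i\lambda_i\sqrt{2/\pi}\,\varepsilon^{\mathrm{lower}}_{\mathrm{init},i}\le\sqrt{2/\pi}\sum_i\varepsilon^{\mathrm{lower}}_{\mathrm{init},i}<\sum_i\varepsilon^{\mathrm{lower}}_{\mathrm{init},i}
\end{equation*}
whenever the sum is positive, so positivity of the average endpoint floor suffices.

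The main obstacle is not the computation but making the comparison well-posed. The two floors use different metrics (RMSE versus MAE) and different targets. The argument needs the next node to be exactly affine in the endpoint given $x_{t_i}$, so that no extra variance enters through the state term; this is why I would state the linear-interpolation (rectified-flow) form explicitly as the affine assumption. It also needs the same averaging scheme for both floors. If the teacher is only locally affine, the coefficient becomes a local Jacobian scale $\lambda_i$, and the argument goes through verbatim provided the appendix defines $\lambda_i$ that way.
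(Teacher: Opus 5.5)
Your proposal is correct and follows essentially the same argument as the paper's appendix proof. The affine relation $Z=\lambda_i Y+(1-\lambda_i)x_{t_i}$ makes the next-node conditional a $\lambda_i$-scaled, shifted copy of the Gaussian endpoint conditional. The optimal endpoint RMSE is then $s$, and the optimal next-node MAE, attained at the median (equal to the mean), is $\lambda_i s\sqrt{2/\pi}$. Strictness and the node-averaged comparison follow from $\lambda_i\le 1$ and $\sqrt{2/\pi}<1$.

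The only difference is how you obtain the affine relation: you derive it by eliminating $\epsilon$ from the straight-line interpolation, whereas the paper simply posits it as a locally straight trajectory idealization. Real teacher ODE paths are not straight in general, so it must be stated as an assumption rather than derived, as you yourself acknowledge at the end.
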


\begin{proof}[Proof sketch]
For a scalar Gaussian conditional with standard deviation $s$,
the optimal conditional RMSE is $s$ and the optimal MAE is
$s\sqrt{2/\pi}$. The affine target relation scales the conditional
standard deviation by $\lambda_i$, yielding \cref{eq:ntd-bound}.
This connects next-node target contraction to the coordinate-wise
regression-error floors. See Appendix~\ref{app:lem-ntd} for the full
derivation and interpretation.
\end{proof}

\begin{figure}[t]
\centering
\begin{minipage}[t]{0.49\linewidth}
\begin{algorithm}[H]
\caption{One generator update with NTD + Chunk-Graded Noising.}
\label{alg:ode-init}
\small  
\begin{algorithmic}[1]
\Require Student $\Phi_{\theta}$, teacher trajectories full \newline \
$\{z_{t_k,c}^{\mathrm{tea}}\}_{k=0,\,c=1}^{T,\,C}$, student nodes \newline $\{\tau_i\}_{i=1}^{N+1}$ ($\tau_{N+1}=0$), chunk count $C$, learning rate $\eta$.

  \State Sample $s_c\sim\mathcal{U}\{1,\dots,N\}$ for $c=1,\dots,C$
  \State $(s_1,\dots,s_C)\leftarrow \mathrm{sort}_{\downarrow}(s_1,\dots,s_C)$
  \State Init start-point:
  \Statex \hspace{\algorithmicindent}$x_{\tau_{s_c},c}\leftarrow z_{\tau_{s_c},c}^{\mathrm{tea}}$ for $c=1,\dots,C$
  \State $\{x_{\tau_{s_c+1},c}\}_{c=1}^{C}\leftarrow \Phi_{\theta}\!\left(\{x_{\tau_{s_c},c}\}_{c=1}^{C},\{\tau_{s_c}\}_{c=1}^{C}\right)$
  \State $\mathcal{L}_{\mathrm{NTD}}\leftarrow \sum_{c=1}^{C}\big\|x_{\tau_{s_c+1},c}-z_{\tau_{s_c+1},c}^{\mathrm{tea}}\big\|_{1}$
  \State $\theta\leftarrow \theta-\eta\,\nabla_{\theta}\mathcal{L}_{\mathrm{NTD}}$

\end{algorithmic}
\vspace*{5pt}
\end{algorithm}
\end{minipage}
\vspace{-3mm}
\hfill
\begin{minipage}[t]{0.49\linewidth}
\begin{algorithm}[H]
\caption{OP-DMD Loss}
\label{alg:op-dmd}
\small
\begin{algorithmic}[1]
\Require Student $\Phi_{\theta}$, real score $s_{\mathrm{real}}$, fake score $s_{\mathrm{fake}}$, range $[t_{\min},t_{\max}]$, weights $\lambda_{\perp},\lambda_{\parallel}$ .
\State $\hat{x}_0^{\mathrm{stu}} \leftarrow \textsc{Generator}(\Phi_{\theta})$
\State Sample $t\sim\mathcal{U}[t_{\min},t_{\max}]$, $\epsilon\sim\mathcal{N}(0,I)$
\State $x_t \leftarrow (1-t)\,\hat{x}_0^{\mathrm{stu}} + t\,\epsilon$
\State $\hat{x}_0^{\mathrm{real}} \leftarrow s_{\mathrm{real}}(x_t,t)$;\ \  $\hat{x}_0^{\mathrm{fake}} \leftarrow s_{\mathrm{fake}}(x_t,t)$
\State $r \leftarrow \hat{x}_0^{\mathrm{real}}-\hat{x}_0^{\mathrm{fake}}$;\ \ $\hat{\mu}_{\mathrm{DMD}} \leftarrow \mathrm{sg}(r/\|r\|)$
\State $\delta \leftarrow \hat{x}_0^{\mathrm{stu}}-\hat{x}_0^{\mathrm{real}}$;\ \ $\alpha \leftarrow \langle\hat{\mu}_{\mathrm{DMD}},\delta\rangle$
\State $\delta_{\parallel} \leftarrow \alpha\,\hat{\mu}_{\mathrm{DMD}}$;\ \ $\delta_{\perp} \leftarrow \delta-\delta_{\parallel}$
\State $\mathcal{L} \leftarrow \mathcal{L}_{\mathrm{DMD}} + \lambda_{\perp}\mathcal{R}(\delta_{\perp}) + \lambda_{\parallel}\mathcal{R}(\alpha)$
\State $\theta \leftarrow \theta - \eta\,\nabla_{\theta}\mathcal{L}$
\State Update $s_{\mathrm{fake}}$
\end{algorithmic}
\vspace*{0pt}
\end{algorithm}
\end{minipage}
\end{figure}
\vspace{-3mm}

\subsubsection{Chunk-Graded Noising: compressing the noise mismatch
\texorpdfstring{$N$}{N}}
\label{sec:chunk-noising}

Existing distillation frameworks like Self-Forcing sample the per-chunk noise level
independently during training: in the trajectory-distillation stage the starting
node timestep $t_c$ is obtained at random, and in the distribution-matching stage
the exit timestep $t^c_{\mathrm{exit}}$ is also sampled at random from
$\{t_1,\ldots,t_K\}$. The induced training-time noise level distribution
$\sigma_{\mathrm{tr}}(c)$ can therefore deviate from the inference-time level
$\sigma_{\mathrm{inf}}(c)$ as context errors accumulate across chunks. This
deviation $|\sigma_{\mathrm{inf}}(c)-\sigma_{\mathrm{tr}}(c)|$ contributes to
the mismatch term $N$ in \cref{eq:bound-split}.

We compress this mismatch by imposing two constraints on per-chunk noise sampling. (1) \textbf{Clean Start.} The first chunk ($c=1$) is generated without contamination from previous chunks, so its inference-time noise level is identically zero. We mirror this at training by enforcing $\sigma_{\mathrm{tr}}(1)\!=\!0$ in both stages, removing mismatch on the first chunk. Since the geometric weights $w_c$ decrease monotonically with $c$, eliminating mismatch at early chunks substantially reduces the cumulative mismatch error $N$.
(2) \textbf{Sorted Noise.} For subsequent chunks ($c\!>\!1$), we sample per-chunk noise level under a non-decreasing constraint: (i) during NTD ODE stage, the sampled startpoint timesteps satisfy $t_{c}\!\ge\!t_{c-1}$;
(ii) during distribution matching stage, chunk-wise exit timestep satisfies
$t^{c}_{\mathrm{exit}}\!\ge\!t^{c-1}_{\mathrm{exit}}$. Both constraints induce a monotonically non-decreasing $\sigma_{\mathrm{tr}}(c)$, mirroring the natural error accumulation along the chunk axis at inference.

\Needspace{10\baselineskip}
\begin{lemma}[Reduction of expected noise mismatch]
\label{lem:noise-mismatch}
Let $N_{\mathrm{ours}}$ and $N_{\mathrm{SF}}$ denote the cumulative
mismatch of CGN and the baseline, respectively. Under the effective-noise
model and conditions in Appendix~\ref{app:proof-noise},
\begin{equation}
\mathbb E[N_{\mathrm{ours}}]
<
\mathbb E[N_{\mathrm{SF}}].
\label{eq:noise-mismatch}
\end{equation}
\end{lemma}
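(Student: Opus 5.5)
The plan is to fix a concrete effective-noise model, the one the appendix presumably specifies, and then compare the two expectations chunk by chunk. Each chunk's noise level is modeled as $\sigma(c)=g(t^{c}_{\mathrm{exit}})$ with $g$ nondecreasing and $g(t_K)=0$. We also allow, if needed, an accumulated offset $\sigma_{\mathrm{inf}}(c)$ that is nondecreasing in $c$ with $\sigma_{\mathrm{inf}}(1)=0$; this reflects that inference always runs the full chain and that context errors only accumulate along the chunk axis.

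Under the baseline, the $t^{c}_{\mathrm{exit}}$ are i.i.d.\ uniform on $\{t_1,\dots,t_K\}$, so $\sigma_{\mathrm{tr}}(c)=g(U_c)$ for i.i.d.\ $U_c$. Under CGN, $\sigma_{\mathrm{tr}}(1)=0$ and the remaining levels are the order statistics $g(U_{(1)})\le\dots$ of the same sample (Algorithm~\ref{alg:ode-init}, sorted so that noise is nondecreasing in $c$). Both quantities are written through the decomposition of Eq.~\ref{eq:bound-split}:
\[
\mathbb E[N]=\kappa\sum_c w_c\,\mathbb E\,|\sigma_{\mathrm{inf}}(c)-\sigma_{\mathrm{tr}}(c)|,
\]
with the same weights $w_c$ from Corollary~\ref{cor:cumulative}.

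The argument has three steps.

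\textbf{Step 1 (clean start).} For $c=1$ the CGN term is exactly zero. The baseline term equals $\mathbb E\,g(U_1)>0$ whenever $g$ is nonconstant and $K\ge2$. This gives a strict gain weighted by the largest weight $w_1$.

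\textbf{Step 2 (sorted chunks $c>1$).} Here I would invoke a rearrangement argument. For a fixed realization of the multiset $\{g(U_c)\}_{c\ge 2}$, the assignment minimizing $\sum_c w_c|a_c-b_{\pi(c)}|$ over permutations $\pi$, where $a_c=\sigma_{\mathrm{inf}}(c)$ is nondecreasing, is the comonotone one, i.e.\ sorting the $b$'s in the same order as the $a$'s. This is the standard fact that $|a-b|$ is submodular, so the sorted coupling minimizes the transport cost. The baseline corresponds to a uniformly random permutation of the same multiset, and its expected cost is at least the minimum. Taking expectations over the multiset gives $\mathbb E[N_{\mathrm{ours}}]\le\mathbb E[N_{\mathrm{SF}}]$ on chunks $c\ge2$, and Step 1 supplies the strict inequality.

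\textbf{Step 3 (the obstacle).} The main obstacle is that the weights $w_c$ are not constant. The pure rearrangement inequality holds for an unweighted sum, or for weights applied to matched pairs, but a position-dependent weight can in principle favor moving large mismatches to late chunks. I would first try pairwise swaps. If $a_i\le a_j$ with $i<j$, then swapping to the sorted order changes the cost by
\[
w_i\big(|a_i-b_{\min}|-|a_i-b_{\max}|\big)+w_j\big(|a_j-b_{\max}|-|a_j-b_{\min}|\big).
\]
I would bound this using $w_i\ge w_j$ together with the fact that the reduction at the earlier index dominates when $a_i$ is small.

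If that fails in general, the fallback is to adopt the appendix's simplifying condition. The choice I would try first is $\sigma_{\mathrm{inf}}(c)$ lying below the training support for early chunks, so that $|a-b|=b-a$ there. In that regime each term is linear in $b$. Sorting $b$ ascending places the small $b$'s on the large $w$'s, and the weighted rearrangement inequality $\sum w_c b_{(c)}\le\mathbb E\sum w_c b_{\pi(c)}$ for decreasing $w$ then applies directly, giving the strict inequality.
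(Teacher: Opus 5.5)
Even in its fallback form, your argument proves the inequality for a model that differs from the appendix's in exactly the features that drive the paper's proof. So as a proof of the stated lemma it has a genuine gap.

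In the appendix, $\sigma$ is an additive proxy over the $m=c-1$ preceding chunks, with $\sigma^{\mathrm{SF}}_{\mathrm{tr}}(c)=\sum_{j\le m}U_j$ and $\sigma^{\mathrm{Ours}}_{\mathrm{tr}}(c)=\sum_{j\le m}U_{(j)}$. Hence $\Delta_1=0$ for \emph{both} methods, and your Step~1, your only source of strictness, contributes nothing. More importantly, the inference levels depend on the method. The baseline refreshes the cache, so $\sigma^{\mathrm{SF}}_{\mathrm{inf}}\equiv 0$. CGN skips the refresh and keeps the cache left by the last denoising step, so $\sigma^{\mathrm{Ours}}_{\mathrm{inf}}(c)=md$ with $0<d\le U_j$ almost surely. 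Your choices $g(t_K)=0$ and a common $\sigma_{\mathrm{inf}}$ for both methods exclude this.

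The paper's argument runs as follows:
\begin{itemize}
\item $\mathbb E[\Delta^{\mathrm{SF}}_c]=m\mu$.
\item The pathwise sorted-prefix bound $\sum_{j\le m}U_{(j)}\le \frac{m}{C}\sum_{j\le C}U_j$ needs no independence and gives $\mathbb E[\sum_{j\le m}U_{(j)}]\le m\mu$.
\item $U_{(j)}\ge d$ removes the absolute value, so $\mathbb E[\Delta^{\mathrm{Ours}}_c]\le m(\mu-d)$ and $\mathbb E[N_{\mathrm{SF}}]-\mathbb E[N_{\mathrm{Ours}}]\ge \kappa d\sum_{c\ge 2}w_c(c-1)>0$.
\end{itemize}
If you transplant your rearrangement idea into this model with a common inference level, you get only $\le$ under the stated assumptions, since the increments may be dependent or even identical. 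The strict gain comes from the retained increment $d>0$, and that is the idea your proposal is missing.

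Within your own model, Step~2 is not merely unproved but false, so no pairwise-swap argument in Step~3 can rescue it. Here is a counterexample:
\begin{itemize}
\item Take $L=1$ (so $w_c=C-c+1$), $C=6$, and $K=4$.
\item Set $g(t_1)=B$ and $g(t_2)=g(t_3)=g(t_4)=0$.
\item Set $\sigma_{\mathrm{inf}}(c)=B-\epsilon$ for $c\ge 2$.
\item Let CGN sort $C-1$ i.i.d.\ draws onto chunks $2,\dots,C$.
\end{itemize}
With $k\sim\mathrm{Bin}(5,3/4)$ zeros, sorting places them on the largest weights. This costs $(B-2\epsilon)\,\mathbb E[k(5-k)]/2=\tfrac{15}{8}(B-2\epsilon)$ more than a random assignment. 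Clean Start saves only $w_1\mathbb E[g(U_1)]=\tfrac32 B$. Hence $\mathbb E[N_{\mathrm{ours}}]>\mathbb E[N_{\mathrm{SF}}]$ whenever $\epsilon<B/10$.

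Only your fallback survives. It needs $|a_c-b|=b-a_c$ at \emph{every} chunk, not just the early ones. Since $g(t_K)=0$ lies in the training support, this forces $\sigma_{\mathrm{inf}}\equiv 0$. In that case Chebyshev's sum inequality plus Clean Start does give a correct proof, and your linearization plays exactly the role of the paper's condition $d\le U_j$. But it proves a different lemma, in which the gain comes from the first chunk rather than from skipping the KV-cache refresh.
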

The derivation is given in Appendix~\ref{app:proof-noise}.
We integrate the Chunk-Graded Noising schedule with NTD to formulate the ODE initialization regression training. The comprehensive training algorithm is presented in \Cref{alg:ode-init}. By training the model to be robust to noisy context, we shrink the train-inference distribution shift. As a result, at inference time we can \seedtextit{completely skip} the post-chunk KV-cache re-computation step, which dramatically reduces the computation cost and boosts actual generation speed.
\begin{figure}[t]
  \centering
  \includegraphics[width=0.94\linewidth]{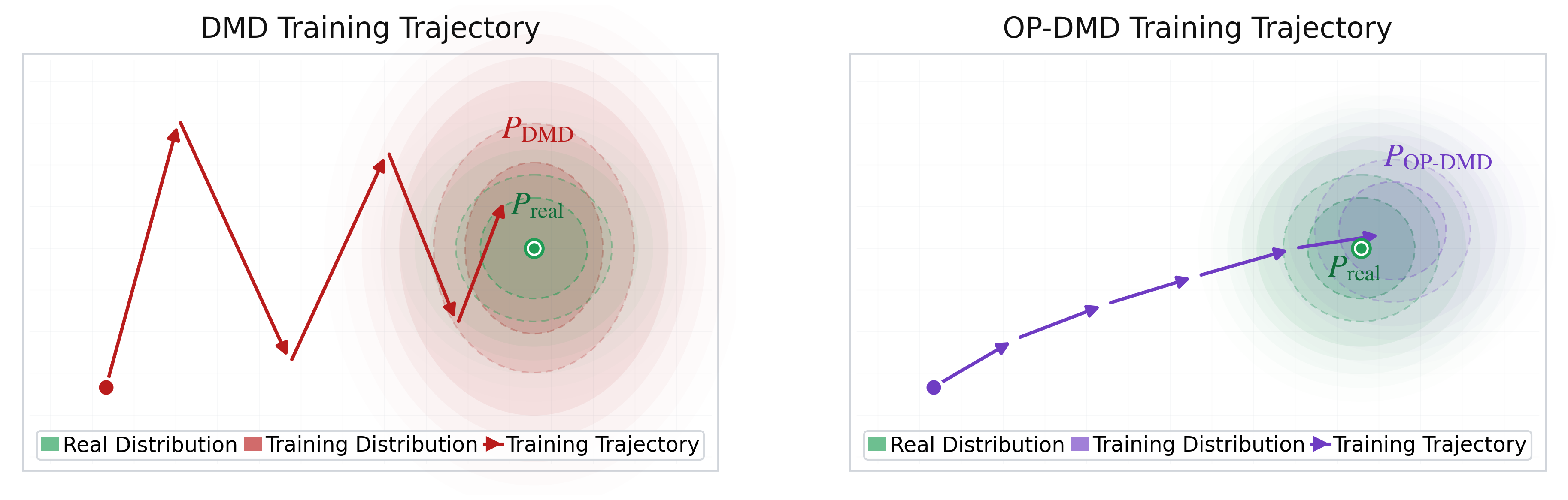}
  \caption{Training-trajectory comparison in a 2D latent toy.
  \textbf{Left:} Standard DMD with training oscillation and unconstrained subspace shift between $P_{\mathrm{DMD}}$ and $P_{\mathrm{real}}$. \textbf{Right:} OP-DMD with
  orthogonal penalty constraints, yields a stable training trajectory toward $P_{\mathrm{real}}$.}
  \vspace{-4mm}
  \label{fig:op-dmd-trajectory}
\end{figure}
\subsubsection{Orthogonal-Parallel DMD (OP-DMD): narrowing the realized-to-theoretical gap}
\label{sec:op-dmd}

The standard DMD loss updates the student generator with the gradient
signal $r=\hat{x}^{\mathrm{real}}_0-\hat{x}^{\mathrm{fake}}_0$, which provides
KL supervision along the DMD update direction $\hat{\mu}_{\mathrm{DMD}}$ in the latent space. However, the orthogonal subspace receives no explicit supervision, which induces potential shift in the training distribution and oscillation in the optimization trajectory. We refer to this effect as \seedtextit{off-axis drift}.

Based on this observation, we propose \seedtextit{Orthogonal-Parallel DMD} (OP-DMD), which decomposes the student-teacher residual along $\hat{\mu}_{\mathrm{DMD}}$ and applies separate supervision on the parallel and orthogonal components. As shown in \Cref{alg:op-dmd}, $\hat{\mu}_{\mathrm{DMD}}=r/\|r\|$ and $\delta=\hat x^{\mathrm{student}}_{0}-\hat{x}^{\mathrm{real}}_0$, then OP-DMD decomposes $\delta$ along $\hat{\mu}_{\mathrm{DMD}}$: 
\begin{equation}
\delta=\delta_{\parallel}+\delta_{\perp},\quad
\delta_{\parallel}=\langle\hat{\mu}_{\mathrm{DMD}},\delta\rangle\,\hat{\mu}_{\mathrm{DMD}},\quad
\delta_{\perp}=\delta-\delta_{\parallel}, \quad
\langle \hat{\mu}_{\mathrm{DMD}},\delta_{\perp}\rangle\ = 0.
\label{eq:op-decomp}
\end{equation}
Geometrically, $\delta_{\parallel}$ measures how much $\hat x^{\mathrm{student}}_{0}$ deviates from the teacher prediction along the KL-improving direction, while $\delta_{\perp}$ captures drift in the subspace unconstrained by DMD. The OP-DMD objective augments the standard DMD loss with pointwise
penalties on both components:
\begin{equation}
\mathcal{L}_{\mathrm{OP\text{-}DMD}}
=\mathcal{L}_{\mathrm{DMD}}
+\lambda_{\perp}\,\mathcal{R}(\delta_{\perp})
+\lambda_{\parallel}\,\mathcal{R}(\langle\hat{\mu}_{\mathrm{DMD}},\delta\rangle),
\quad \lambda_{\parallel}\!\ll\!\lambda_{\perp},
\label{eq:op-dmd-loss}
\end{equation}
where $\mathcal{R}(\cdot)$ denotes an $\ell_1$ regression loss. As shown in \cref{fig:op-dmd-trajectory} (right), the orthogonal term shrinks $\delta_{\perp}$ in the DMD-unsupervised subspace and constrains the orthogonal directions that would otherwise drift, suppressing off-axis drift. The parallel term lightly penalizes the on-axis projection coefficient (with $\lambda_{\parallel} \ll \lambda_{\perp}$) and stabilizes training as $\|r\|\!\to\!0$. A full OP-DMD step is described in \Cref{alg:op-dmd}.
\section{Experiments}
\label{sec:experiments}
\subsection{Experimental setup}
\label{sec:exp-setup}

\begin{figure}[t!]
  \centering
  \makebox[\linewidth][c]{\includegraphics[width=1.03\linewidth]{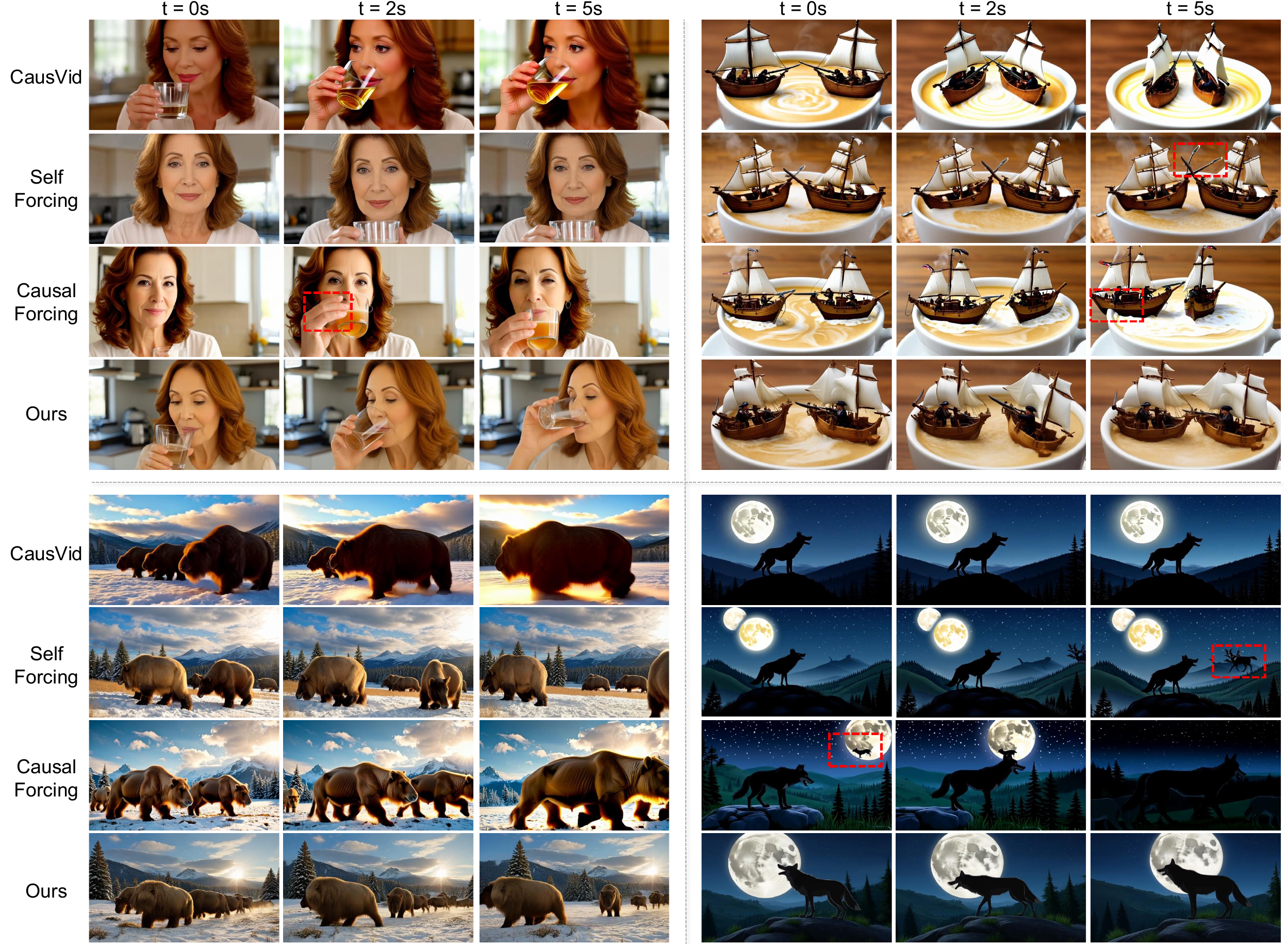}}
  \caption{Qualitative comparison against three distillation-based AR baselines (CausVid, Self-Forcing, Causal-Forcing). Each row shows key frames sampled from 5-second generation.}
  \vspace{-4mm}
  \label{fig:qualitative}
\end{figure}

\paragraph{Implementation.}
We adopt Wan2.1-1.3B~\cite{wan2024wan} as our base model, which generates 5-second videos by default at $832\!\times\!480$ resolution. Following
CausVid~\cite{yin2024causvid}, we first fine-tune the model with
block-causal attention masking on 6K teacher ODE pairs, using prompts
sampled from MixKit. Our student model runs $4$ denoising steps per
chunk, and each chunk consists of $3$ frames. The OP-DMD loss coefficients $\lambda_{\parallel}$ and $ \lambda_{\perp}$ are set to $0.0001$ and $0.01$, respectively. Additional implementation details including hyper-parameter settings are provided in \Cref{app:experiment}. \paragraph{Evaluation \& baselines.}
We compare our method against representative open-source video generation models. 
(1) \seedtextit{Bidirectional video diffusion models} such as Wan2.1-1.3B~\cite{wan2024wan} and LTX-Video-1.9B~\cite{hacohen2024ltxvideo}.
(2) \seedtextit{Autoregressive video diffusion models} like SkyReels-V2-DF-540P~\cite{skyreels2025}, MAGI-1 (base)~\cite{magi12025}. In particular, we compare with three distillation-based autoregressive baselines: CausVid~\cite{yin2024causvid}, Self-Forcing~\cite{huang2024selfforcing}, and the recent work Causal-Forcing~\cite{causalforcing2025}. We evaluate all methods with a 5-second generation horizon on VBench~\cite{huang2023vbench} benchmark to assess the generation quality. Following CausVid~\cite{yin2024causvid}, we use the first $128$ prompts from MovieGen to generate videos. We report speed test results both with and without \texttt{torch.compile} optimizations. FPS measures diffusion-backbone throughput, excluding VAE decoding.

\subsection{Experiment results}
\paragraph{Qualitative comparison.} \cref{fig:qualitative} shows key frames from $5$-second videos. CausVid exhibits notable chunk-level \seedtextit{error accumulation}, manifesting as severe color oversaturation and degraded image quality. Self-Forcing mitigates exposure bias through self-rollout and accumulates less error than CausVid, yet still suffers from artifacts. Causal-Forcing, while achieving stronger dynamic degree, is more prone to over-saturated colors and exhibits unrealistic spatial arrangements and structural artifacts. In contrast, our method preserves stable subject identity and image quality throughout the sequence. More visualization results and comparison are provided in \Cref{app:visualization}.

\begin{table*}[t!]
\centering
\caption{Quantitative comparison on VBench using the first 128 MovieGen prompts under a 5-second generation horizon. Throughput (FPS) is measured on a single NVIDIA H100 GPU.}
\label{tab:moviegen}
\small
\setlength{\tabcolsep}{2pt}
\renewcommand{\arraystretch}{1.05}
\resizebox{\textwidth}{!}{%
\begin{tabular}{l cc cccccc c |c}
\toprule
Method
 & \makecell{FPS w/ \\compile}
 & \makecell{FPS w/o \\compile}
 & \makecell{Subject\\Cons.}
 & \makecell{Background\\Cons.}
 & \makecell{Dynamic\\Degree}
 & \makecell{Image\\Quality}
 & \makecell{Motion\\Smooth.}
 & \makecell{Aesthetic\\Quality}
 & \makecell{Temporal\\Flicker.}
 & \textbf{\makecell{Quality\\Score}} \\
\midrule
\rowcolor{rowgray}\multicolumn{11}{l}{\seedtextit{Bidirectional Video Diffusion Models}} \\
Wan2.1            & 1.0  & 0.8  & 94.16 & 96.21 & 57.03 & 66.64 & 98.19 & 64.71 & 96.88 & 82.17 \\
LTX-Video         & 10.8 & 9.0  & 92.69 & 95.04 & 41.41 & 63.07 & 98.92 & 55.22 & 97.49 & 79.10 \\
\midrule
\rowcolor{rowgray}\multicolumn{11}{l}{\seedtextit{Autoregressive Video Diffusion Models}} \\
SkyReels-V2       & 0.6  & 0.5  & 95.87 & 96.86 & 32.81 & 62.51 & 98.72 & 63.45 & 97.91 & 80.63 \\
MAGI-1            & 0.3  & 0.2  & 96.29 & 96.50 & 35.94 & 61.71 & 99.22 & 64.76 & 98.39 & 81.42 \\
\midrule
\rowcolor{rowgray}\multicolumn{11}{l}{\seedtextit{Distilled Autoregressive Video Models}} \\
CausVid           & 20.3 & 16.9 & 94.65 & 94.51 & 45.31 & 68.00 & 98.27 & 62.10 & 97.06 & 80.93 \\
Self-Forcing      & 20.3 & 16.9 & 93.53 & 94.62 & 63.28 & 69.04 & 98.27 & 63.36 & 96.70 & 82.33 \\
Causal-Forcing    & 20.3 & 16.9 & 93.99 & 94.93 & 78.91 & 70.73 & 97.43 & 64.03 & 95.02 & 82.91 \\
\midrule
\textbf{Ours}     & \textbf{26.8} & \textbf{22.2} & 94.25 & 94.66 & 82.03 & 69.25 & 98.26 & 63.46 & 95.82 & \textbf{83.59} \\
\bottomrule
\end{tabular}
\vspace{-10mm}
}
\end{table*}

\paragraph{Quantitative comparison.} For chunk-wise autoregressive baselines, though they adopt 4-step generation, they actually require $5$-NFE to finish single chunk generation, as an extra forward pass is required after each chunk's $4$ denoising steps to construct a clean KV-cache (higher quality but higher latency). In contrast, our method uses only $4$ NFEs per chunk, eliminating the extra forward pass. \Cref{tab:moviegen} shows that our method achieves the best quality score of \textbf{83.59} and the highest
throughput, reaching \textbf{26.8}/\textbf{22.2} FPS (with/without
compilation). Notably, with only $4$ NFE, our method still consistently outperforms all distilled autoregressive baselines with $5$-NFE. For example, surpassing Self-Forcing and Causal-Forcing by $+1.26$ and $+0.68$ in Quality Score, respectively, delivering impressive quality-efficiency trade-off among existing approaches.

\Needspace{13\baselineskip}
\paragraph{Human preference.}
\begin{wraptable}{r}{0.56\linewidth}
  \vspace{-2pt}
  \centering
  \small
  \setlength{\tabcolsep}{5pt}
  \renewcommand{\arraystretch}{1.1}
  \caption{Pairwise human preference on $100$ VBench prompts ($500$ votes per comparison).}
  \label{tab:human-preference}
  \begin{tabular}{lcc}
    \toprule
    Comparison & TriTD ($4$ NFE) & Baseline ($5$ NFE) \\
    \midrule
    vs. CausVid        & \textbf{438 (87.6\%)} & 62 (12.4\%) \\
    vs. Self-Forcing   & \textbf{323 (64.6\%)} & 177 (35.4\%) \\
    vs. Causal-Forcing & \textbf{354 (70.8\%)} & 146 (29.2\%) \\
    \bottomrule
  \end{tabular}
  \vspace{-8pt}
\end{wraptable}
To complement the automatic metrics, we conducted a blinded pairwise evaluation on $100$ randomly sampled VBench prompts with five evaluators. As shown in \Cref{tab:human-preference}, TriTD receives the majority of preference votes against all three distilled AR baselines despite requiring fewer effective NFEs. The complete evaluation protocol and absolute-rating results are provided in \Cref{app:human-eval}.

\Needspace{15\baselineskip}
\subsection{Ablation and analysis}
\label{sec:ablation}

\paragraph{Ablation study.} 
\begin{wraptable}{r}{0.52\linewidth}
  \vspace{-12pt}   
  \centering
  \small
  \setlength{\tabcolsep}{4pt}
  \caption{Ablation of the three proposed components.} 
  \label{tab:ablation}
  \begin{tabular}{clcccc}
    \toprule
    \# & Variant & NTD & CGN & OP-DMD & Quality\,$\uparrow$ \\
    \midrule
    M0 & Baseline                  & \xmark & \xmark & \xmark & 81.82 \\
    M1 & \;+ NTD init              & \cmark & \xmark & \xmark & 82.07 \\
    M2 & \;+ CGN  & \cmark & \cmark & \xmark & 82.10 \\
    M3 & \;+ OP-DMD & \cmark & \cmark & \cmark & \textbf{83.59} \\
    \bottomrule
  \end{tabular}
  \vspace{-10pt}
\end{wraptable}
We further ablate the three components on top of the Self-Forcing-like baseline and report the Quality Score on MovieGen prompts in \Cref{tab:ablation}. Replacing the endpoint regression at the ODE-init stage with our NTD L1 trajectory
distillation (M1) lifts the baseline from $81.82$ to $82.07$, showing the benefit of next-node supervision for ODE initialization. Applying chunk-graded noising schedule with M1 strategy (M2) further pushes the score to $82.10$ and,
more importantly, reduces the train-inference KV-cache noise mismatch, allowing the student to skip the per-chunk KV-cache refresh at inference and thereby enabling the generation speed-up reported in \Cref{tab:moviegen}. Finally, replacing the standard DMD objective in M2 strategy with OP-DMD (M3) yields the largest jump to \textbf{83.59}. By imposing a pointwise constraint on the DMD-orthogonal subspace, OP-DMD suppresses off-axis drift and complements the trajectory and context-distribution objectives motivated by our analysis. 

\Needspace{9\baselineskip}
\paragraph{Efficiency comparison \& breakdown.}
As shown in \Cref{fig:latency}, the VAE decoding time remains nearly constant at $\approx 1.74$\,s across all settings, indicating that the gain is almost entirely due to the diffusion backbone. In the SF baseline, each chunk requires four denoising forward passes and one additional forward pass to refresh the KV-cache, resulting in $4.79$\,s diffusion cost and $6.53$\,s total latency per video. Applying \texttt{torch.compile} to SF alone reduces the diffusion cost to $3.98$\,s ($1.20{\times}$ core speedup) and the total latency to $5.73$\,s ($1.14{\times}$ end-to-end speedup).

\begin{wrapfigure}{r}{0.44\linewidth}
  \vspace{-4pt}
  \centering
  \includegraphics[width=\linewidth]{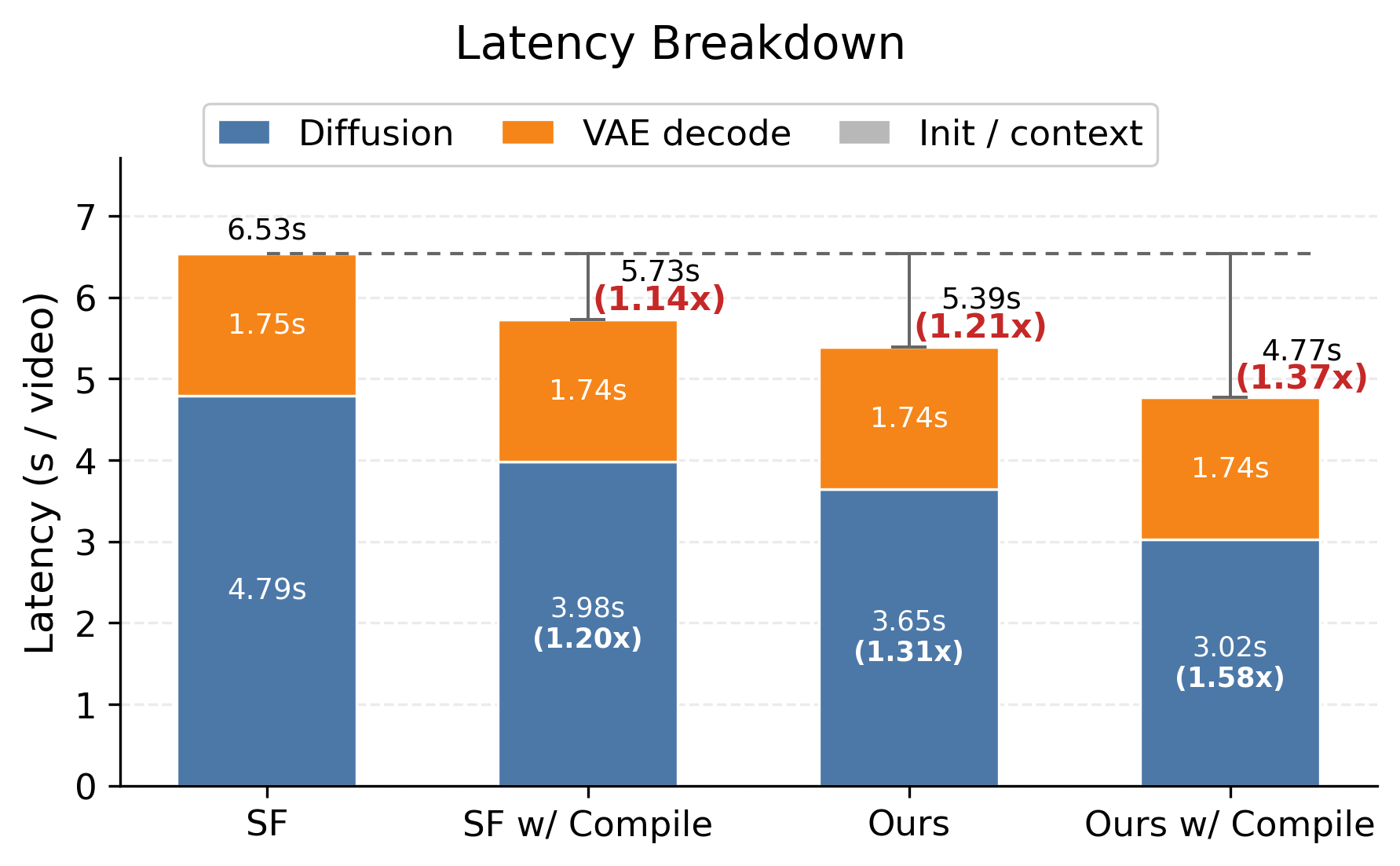}
  \captionsetup{width=\linewidth}
  \caption{\textbf{Latency breakdown for four settings.}}
  \label{fig:latency}
  \vspace{-8pt}
\end{wrapfigure}
Because the chunk-graded noising schedule trains the student to remain robust to noisy KV Cache context, our method can directly reuse the noisy KV Cache produced by the last denoising step at inference and skip the refresh pass entirely. This reduces the diffusion cost to $3.65$\,s ($1.31{\times}$ core speedup) and the end-to-end latency to $5.39$\,s ($1.21{\times}$ end-to-end speedup) without compilation. Combined with \texttt{torch.compile}, the diffusion cost further drops to $3.02$\,s ($1.58{\times}$ core speedup), yielding $4.77$\,s total latency per video ($1.37{\times}$ end-to-end speedup) and the $26.8$ FPS reported in \Cref{tab:moviegen}. Notably, our uncompiled variant already outperforms the compiled SF baseline ($5.39$\,s vs.\ $5.73$\,s), suggesting that the gain mainly comes from the algorithmic redesign of the KV-cache pipeline rather than compilation-level engineering.

\Needspace{13\baselineskip}
\paragraph{Long-horizon generalization.}
\begin{wraptable}{r}{0.48\linewidth}
  \vspace{-12pt}
  \centering
  \small
  \renewcommand{\arraystretch}{1.12}
  \captionsetup{width=\linewidth}
  \caption{\textbf{Long-horizon generation over 60 seconds \mbox{using} MovieGen prompts.}}
  \label{tab:long-video}
  \begin{tabular*}{\linewidth}{@{\extracolsep{\fill}}lccc@{}}
    \toprule
    Method & Dyn.\,Deg. & Motion\,Sm. & Quality \\
    \midrule
    Self-Forcing    & 33.26 & 98.11 & 80.35 \\
    Causal-Forcing  & 46.30 & 95.97 & 77.01 \\
    \textbf{Ours}   & \textbf{85.23} & \textbf{97.66} & \textbf{82.64} \\
    \bottomrule
  \end{tabular*}
  \vspace{-8pt}
\end{wraptable}
Although trained on $5$-second clips, TriTD outperforms baselines when the inference horizon is extended to 60 seconds. As shown in \mbox{\Cref{tab:long-video}}, the two baselines exhibit qualitatively distinct failure modes: Self-Forcing falls into near-static rollouts, behaving as low dynamic degree and high motion smoothness in return. Causal-Forcing exhibits rapid visual breakdown, of which both image quality and motion smoothness significantly degrade. In contrast, TriTD retains the highest quality score at 60 seconds, supporting its stability over longer generation horizons. Full-dimension VBench metrics are provided in Appendix~\ref{app:long-video}.

\section{Conclusion and limitations}
\label{sec:conclusion}

We presented \textbf{TriTD}, a tri-partite trajectory-distribution co-optimized distillation framework for real-time autoregressive video diffusion. Starting from a decomposition of the per-chunk inference error into base model error, train-inference noise mismatch, and upstream error propagation, we derive a closed-form upper bound on cumulative error that guides three complementary modules: NTD uses next-node trajectory supervision to improve ODE initialization; CGN aligns training context noise with inference and enables KV-cache reuse; and OP-DMD constrains the DMD-orthogonal subspace to stabilize optimization. As a result, our $4$-NFE generator attains the highest VBench quality score among existing AR models and reaches $26.8$ FPS on a single H100, a $1.32{\times}$ speedup over Self-Forcing, pushing AR video generation into the real-time interactive regime.

\paragraph{Limitations.} TriTD targets chunk-wise video diffusion distillation and is designed to establish a stable and efficient training paradigm rather than to address minute-scale long-video synthesis, for which dedicated mechanisms (e.g., Deep Forcing~\cite{yi2025deep}) are likely complementary to our framework and represent a promising direction for future work. In addition, our training data consists of $6$K prompts from MixKit, substantially smaller than the $16$K setting used in Self-Forcing and related baselines. Scaling up training data is therefore a natural direction for further improving TriTD. Following common practice in autoregressive video distillation~\cite{huang2024selfforcing,causalforcing2025}, all experiments are conducted on Wan2.1. Verifying TriTD on additional backbones and exploring its transferability to other generative tasks (e.g., image and audio synthesis) are also left for future investigation.


{
\small
\bibliographystyle{unsrtnat}
\bibliography{references}
}


\clearpage
\beginappendix

\section{Technical appendices and supplementary material}
\label{app:main}

This appendix contains: (1) the detailed proofs of the lemmas and corollaries stated in Section~\ref{sec:error-bound}--\ref{sec:chunk-noising} of the main text, (2) Detailed experimental settings, (3) supplementary experimental results and (4) additional visualization results and comparison with representative baselines. 

\subsection{Proof of \Cref{cor:cumulative}: Total Accumulated Error Upper Bound}
\label{app:proof-total}

\paragraph{Step 1: Quantifying each error term.}
\begin{itemize}
\item \textbf{Base error.} Assume the residual under matched, error-free context is uniformly bounded by $\varepsilon$. Here $\varepsilon$ denotes the residual upper bound; Appendix~\ref{app:lem-ntd} separately analyzes optimal regression-error floors.
\item \textbf{Distribution-shift error.} At inference time the model receives a KV-cache $\mathcal{K}_{<c}^{\text{inf}}$ whose noise level is $\sigma_{\text{inf}}(c)$, whereas at training time it has been trained on caches of noise level $\sigma_{\text{tr}}(c)$. The resulting mismatch is modeled as a term proportional to the absolute noise-level difference, namely $\kappa \cdot |\sigma_{\text{inf}}(c) - \sigma_{\text{tr}}(c)|$, where $\kappa>0$ reflects the sensitivity of the model to the noise-level change.
\item \textbf{Propagation error.} The accumulated error $e_{c-1}$ from the previous $c-1$ chunks affects the generation of chunk $c$ via the KV-cache. Under a \seedtextit{Lipschitz-continuity assumption}, this effect is bounded: the denoising operator $f$ processes an input that contains an upstream error, and so this error is scaled by $L$ to yield $L \cdot e_{c-1}$.
\end{itemize}

\paragraph{Step 2: Combining the three terms.}
Summing the three sources, we obtain the single-step upper bound
\begin{equation}
e_c \le \varepsilon + \kappa \cdot |\sigma_{\text{inf}}(c) - \sigma_{\text{tr}}(c)| + L \cdot e_{c-1}.
\end{equation}
This follows from the residual and context-perturbation bounds in Lemma~\ref{lem:single-step}.

\paragraph{Step 3: Recursive expansion and summation.}
\seedtextit{Recursive expansion of $e_c$.} Assuming $e_0=0$ and that the upstream scaling is uniform with constant $L$, and writing $\varepsilon_c := \varepsilon + \kappa \cdot |\sigma_{\text{inf}}(c) - \sigma_{\text{tr}}(c)|$, we have
\begin{align}
e_C &\le \varepsilon_C + L \cdot e_{C-1} \\
    &\le \varepsilon_C + L \cdot \varepsilon_{C-1} + L^2 \cdot e_{C-2} \\
    &\ \ \vdots \notag \\
e_c &\le \sum_{i=1}^{c} L^{c-i} \cdot \varepsilon_i.
\end{align}
\seedtextit{Summing over all chunks.}
\begin{equation}
E_C = \sum_{c=1}^{C} e_c \le \sum_{c=1}^{C} \sum_{i=1}^{c} L^{c-i} \varepsilon_i.
\end{equation}
\seedtextit{Exchanging the order of summation.} A fixed term $\varepsilon_i$ appears in every chunk $c \ge i$, each time scaled by some power of $L$, so
\begin{equation}
E_C \le \sum_{i=1}^{C} \Biggl( \sum_{c=i}^{C} L^{c-i} \Biggr)\, \varepsilon_i.
\end{equation}
\seedtextit{Geometric series.}
\begin{equation}
\sum_{c=i}^{C} L^{c-i} \;=\; L^0 + L^1 + \cdots + L^{C-i} \;=\; \frac{1-L^{C-i+1}}{1-L}.
\end{equation}
\seedtextit{Final form.} Substituting the geometric-series identity and renaming $i$ as $c$, we arrive at
\begin{equation*}
E_C \le \sum_{c=1}^{C} \left( \frac{1-L^{C-c+1}}{1-L} \right) \left( \varepsilon + \kappa \cdot |\sigma_{\text{inf}}(c) - \sigma_{\text{tr}}(c)| \right). \qquad\square
\end{equation*}

\paragraph{Remark on the Lipschitz constant $L$.}
The derivation above implicitly assumes $L\neq 1$ when applying the
geometric-series identity. For completeness, the closed form of $w_c$
covers all $L\ge 0$:
\begin{equation}
w_c \;=\;
\begin{cases}
\dfrac{1-L^{C-c+1}}{1-L}, & L\in[0,1)\cup(1,\infty),\\[4pt]
C-c+1, & L=1\ \text{(by L'H\^opital)}.
\end{cases}
\end{equation}
For any \seedtextit{finite} chunk count $C$, $w_c$ is therefore a finite
constant regardless of the value of $L$, and Corollary~\ref{cor:cumulative}
holds without modification. The Lipschitz constant $L$ only governs the
asymptotic regime $C\!\to\!\infty$: $w_c$ converges to $1/(1-L)$ when
$L<1$, grows linearly when $L=1$, and grows exponentially when $L>1$.
For fixed $L$ and $\kappa$, reducing either modeled contribution
reduces the bound for any finite $C$. This identifies base-model fitting
and context-noise alignment as complementary optimization directions.
The NTD analysis below examines the regression-error floors associated
with the first direction.

\subsection{Proof of \Cref{lem:ntd-bound}: NTD Regression-Error Floors}
\label{app:lem-ntd}

\paragraph{Training objectives.}
The endpoint baseline is trained with MSE, as in
\cref{eq:endpoint-l2}, while NTD uses an $L_1$ loss against the
next saved teacher ODE node, as in \cref{eq:ntd-l1}.
Teacher trajectories are collected before training, and their saved
next nodes directly supervise NTD. The following analysis adopts
idealized local-affine and conditional-Gaussian assumptions to motivate
this design.

\paragraph{Target model and assumptions.}
Fix a chunk $c$ and node $i$. Let $X$ contain the information
available to the causal student, including the current noisy
chunk $x_{t_i}$ and its visible context. Let
$Y=x_0^\star\in\mathbb R^D$ and $Z=x_{t_{i+1}}^\star\in\mathbb R^D$
be the endpoint and next-node targets. We assume
\begin{equation}
Z=\lambda_iY+b_i(X),\qquad
\lambda_i=1-\frac{t_{i+1}}{t_i}\in[0,1],\qquad
b_i(X)=(1-\lambda_i)x_{t_i},
\label{eq:ntd-affine-assumption}
\end{equation}
where $t_i>0$ and $0\le t_{i+1}\le t_i$.
This is a locally straight trajectory idealization. For each coordinate
$q$, assume a Gaussian conditional
\begin{equation}
Y_q\mid X,c,i\sim
\mathcal N\!\left(a_{c,i,q}(X),s_{c,i,q}^2(X)\right),
\qquad
\mathbb E[s_{c,i,q}^2(X)]<\infty.
\label{eq:ntd-gaussian-assumption}
\end{equation}
The coordinate-wise analysis allows correlated coordinates and
characterizes target uncertainty given the causal student's information.

\paragraph{Coordinate-wise regression-error floors.}
For a fixed $X,c,i,q$, define the optimal endpoint conditional RMSE
and next-node conditional MAE over scalar predictions $u$ and $v$:
\begin{align}
r_{\mathrm{init},c,i,q}(X)
&:=\inf_{u\in\mathbb R}
\sqrt{\mathbb E[(u-Y_q)^2\mid X,c,i]}
=s_{c,i,q}(X), \label{eq:endpoint-rmse-floor}\\
r_{\mathrm{NTD},c,i,q}(X)
&:=\inf_{v\in\mathbb R}
\mathbb E[|v-Z_q|\mid X,c,i].
\label{eq:next-mae-floor}
\end{align}
The conditional mean minimizes the endpoint MSE, and its square
root defines the RMSE measure above; the training objective itself
remains MSE. We average these scalar floors over inputs, chunks,
and coordinates:
\begin{align}
\varepsilon^{\mathrm{lower}}_{\mathrm{init},i}
&:=\mathbb E_{c,X}\!\left[
\frac1D\sum_{q=1}^{D}r_{\mathrm{init},c,i,q}(X)\right],\\
\varepsilon^{\mathrm{lower}}_{\mathrm{NTD},i}
&:=\mathbb E_{c,X}\!\left[
\frac1D\sum_{q=1}^{D}r_{\mathrm{NTD},c,i,q}(X)\right].
\end{align}
Thus the endpoint floor averages coordinate-wise optimal conditional
RMSEs, while the next-node floor averages coordinate-wise optimal MAEs.

\paragraph{Fixed-node comparison.}
By \cref{eq:ntd-affine-assumption,eq:ntd-gaussian-assumption},
$Z_q\mid X,c,i$ has standard deviation $\lambda_i s_{c,i,q}(X)$.
Its median and mean coincide, and the absolute deviation of a scalar
Gaussian about its mean is its standard deviation times
$\sqrt{2/\pi}$. Therefore
\begin{equation}
r_{\mathrm{NTD},c,i,q}(X)
=\lambda_i s_{c,i,q}(X)\sqrt{\frac{2}{\pi}}.
\end{equation}
Averaging at the fixed node gives
\begin{equation}
\varepsilon^{\mathrm{lower}}_{\mathrm{NTD},i}
=\lambda_i\sqrt{\frac{2}{\pi}}\,
\varepsilon^{\mathrm{lower}}_{\mathrm{init},i}.
\end{equation}
Since $\lambda_i\le1$ and $\sqrt{2/\pi}<1$, this proves
\cref{eq:ntd-bound} whenever
$\varepsilon^{\mathrm{lower}}_{\mathrm{init},i}>0$.
If that floor is zero, both sides are zero.

\paragraph{Averaging over nodes.}
Using the same node-sampling distribution for both measures, define
$\varepsilon^{\mathrm{lower}}_{\mathrm{init}}
:=\mathbb E_i[\varepsilon^{\mathrm{lower}}_{\mathrm{init},i}]$
and
$\varepsilon^{\mathrm{lower}}_{\mathrm{NTD}}
:=\mathbb E_i[\varepsilon^{\mathrm{lower}}_{\mathrm{NTD},i}]$.
Keeping the node-dependent $\lambda_i$ inside the expectation,
\begin{align}
\varepsilon^{\mathrm{lower}}_{\mathrm{NTD}}
&=\sqrt{\frac{2}{\pi}}\,
\mathbb E_i[\lambda_i\varepsilon^{\mathrm{lower}}_{\mathrm{init},i}]
\nonumber\\
&\le\sqrt{\frac{2}{\pi}}\,
\varepsilon^{\mathrm{lower}}_{\mathrm{init}}
<\varepsilon^{\mathrm{lower}}_{\mathrm{init}},
\end{align}
where the last inequality requires a positive average endpoint
floor. \qed

\paragraph{Interpretation.}
The factor $\lambda_i$ captures the contraction of conditional target
variability, while $\sqrt{2/\pi}$ relates optimal MAE to conditional
RMSE under the Gaussian model. Together, they explain the
regression-error comparison in \Cref{lem:ntd-bound} and motivate
next-node trajectory supervision.

\subsection{Proof of \Cref{lem:noise-mismatch}: Expected Noise-Mismatch Reduction}
\label{app:proof-noise}

By \cref{eq:bound-split}, write
\begin{equation}
N=\kappa\sum_{c=1}^{C}w_c\Delta_c,\qquad
\Delta_c=|\sigma_{\mathrm{inf}}(c)-\sigma_{\mathrm{tr}}(c)|,
\qquad w_c=\sum_{j=0}^{C-c}L^j>0.
\end{equation}
We compare expectations over the training-time sampling, holding
$L\ge0$, $\kappa>0$, and the weights fixed across methods.

\paragraph{Idealized additive effective-noise model.}
The context of chunk $c$ contains only the preceding $m=c-1$
chunks. The scalar $\sigma$ is an additive proxy for effective
context noise. Assume:
\begin{enumerate}
\item A rollout samples $C$ exit timesteps $T_1,\ldots,T_C$ from the
same marginal distribution. Let $U_j=\delta(T_j)$, where $\delta$
is non-decreasing, $U_j\ge d>0$ almost surely, and
$\mathbb E[U_j]=\mu<\infty$. The samples may be dependent.
For the uniform four-step setting, $\mu$ is the mean of the four
increment values.
\item In the SF baseline considered here, training retains the
sampled noisy cache without a clean refresh, whereas inference
refreshes the cache. The model assigns
$\sigma_{\mathrm{tr}}^{\mathrm{SF}}(c)=\sum_{j=1}^{m}U_j$
and $\sigma_{\mathrm{inf}}^{\mathrm{SF}}(c)=0$.
Zero denotes the refreshed-cache noise reference.
\item CGN orders the sampled increments as
$U_{(1)}\le\cdots\le U_{(C)}$ and retains the final-denoising cache
at inference. We model
$\sigma_{\mathrm{tr}}^{\mathrm{Ours}}(c)=\sum_{j=1}^{m}U_{(j)}$
and $\sigma_{\mathrm{inf}}^{\mathrm{Ours}}(c)=md$,
where $d$ is the fixed retained increment and does not exceed
any sampled increment.
\end{enumerate}
Both methods therefore have $\Delta_1=0$, consistent with Clean
Start.

\paragraph{Baseline mismatch.}
Nonnegativity and linearity of expectation give
\begin{equation}
\mathbb E[\Delta_c^{\mathrm{SF}}]
=\mathbb E\!\left[\sum_{j=1}^{m}U_j\right]=m\mu.
\label{eq:sf-expected-mismatch}
\end{equation}

\paragraph{Sorted-prefix bound.}
For every realization and $1\le m\le C$, the mean of the $m$
smallest increments is no larger than the mean of all $C$:
\begin{equation}
\sum_{j=1}^{m}U_{(j)}
\le\frac{m}{C}\sum_{j=1}^{C}U_j.
\end{equation}
Consequently,
\begin{equation}
\mathbb E\!\left[\sum_{j=1}^{m}U_{(j)}\right]\le m\mu.
\label{eq:sorted-prefix-bound}
\end{equation}
Thus sorting bounds the expected cumulative contribution of the
early chunks.

\paragraph{Our mismatch and cumulative comparison.}
Since $U_{(j)}\ge d$, for $m=c-1$,
\begin{align}
\mathbb E[\Delta_c^{\mathrm{Ours}}]
&=\mathbb E\!\left[\left|\sum_{j=1}^{m}U_{(j)}-md\right|\right]
\nonumber\\
&=\mathbb E\!\left[\sum_{j=1}^{m}U_{(j)}\right]-md
\le m(\mu-d).
\label{eq:ours-expected-mismatch}
\end{align}
The same formula holds for $m=0$ with empty sums. Thus
\begin{equation}
\mathbb E[\Delta_c^{\mathrm{SF}}]
-\mathbb E[\Delta_c^{\mathrm{Ours}}]\ge(c-1)d.
\end{equation}
For $C\ge2$, the common positive weights imply
\begin{equation}
\mathbb E[N_{\mathrm{SF}}]-\mathbb E[N_{\mathrm{Ours}}]
\ge\kappa d\sum_{c=2}^{C}w_c(c-1)>0,
\end{equation}
which proves \cref{eq:noise-mismatch}. \qed

\paragraph{Boundary cases.}
For $C=1$, both mismatches are zero. If the retained increment is
allowed to be $d=0$, the same argument yields a non-strict inequality.

\subsection{Detailed Experimental Settings}
\label{app:experiment}

\paragraph{Generation and evaluation protocol.}
Unless otherwise specified, all main-paper comparisons use a $5$-second
generation horizon at $832\!\times\!480$ resolution. Following
CausVid~\cite{yin2024causvid}, we use the first $128$ prompts from
MovieGen and evaluate the generated videos with VBench. The reported
quality score aggregates the visual-quality dimensions used in the
main comparison, including subject consistency, background
consistency, dynamic degree, image quality, motion smoothness,
aesthetic quality, and temporal flickering. The long-horizon
experiment extends the same autoregressive generation setting to
$60$ seconds; its detailed protocol and results are reported in
\Cref{app:long-video}.

\paragraph{Baselines.}
We compare TriTD with two bidirectional video diffusion models,
Wan2.1-1.3B \cite{wan2024wan} and LTX-Video-1.9B \cite{hacohen2024ltxvideo};
two autoregressive video diffusion models, SkyReels-V2-DF-540P \cite{skyreels2025}
and MAGI-1 (base) \cite{magi12025}; and three distilled autoregressive
baselines, CausVid \cite{yin2024causvid},
Self-Forcing \cite{huang2024selfforcing}, and
Causal-Forcing \cite{causalforcing2025}. This grouping separates the
quality--speed comparison with bidirectional generators from the
direct comparison with chunk-wise autoregressive methods.

\paragraph{Inference and efficiency measurement.}
Our student performs four denoising steps per chunk, with three frames
in each chunk. Throughput is measured on a single NVIDIA H100 GPU and
is reported both with and without \texttt{torch.compile}. The latency
breakdown uses end-to-end wall-clock time and separately reports the
diffusion-backbone and VAE-decoding costs; all speedups use the
uncompiled Self-Forcing configuration as the common reference unless
explicitly stated otherwise.

\subsection{Human Evaluation Protocol and Additional Results}
\label{app:human-eval}

\paragraph{Protocol.}
We randomly sample $100$ prompts from VBench, covering categories such as consistency, scene, action, and color, and invite five evaluators to participate. For every prompt, each evaluator views three pairs of $5$-second videos generated from the same prompt, comparing TriTD separately against CausVid, Self-Forcing, and Causal-Forcing. Method names are hidden, and the left--right positions are independently randomized. Each TriTD--baseline comparison therefore contains $100\times5=500$ pairwise votes. Evaluators select the overall better video by jointly considering visual quality and prompt alignment; the resulting pairwise preferences are reported in \Cref{tab:human-preference}.

\paragraph{Absolute ratings.}
Evaluators also rate each generated video from $0$ to $10$ along four dimensions: aesthetics, color harmony, motion/action rationality, and structural detail. The overall score is the equally weighted mean of these four dimensions. As shown in \Cref{tab:human-absolute}, TriTD obtains the highest overall score of $8.93$, compared with $7.40$ for CausVid, $8.73$ for Self-Forcing, and $8.15$ for Causal-Forcing. It achieves the highest or tied-highest score on every evaluated dimension.

\begin{table}[H]
  \centering
  \caption{Absolute human ratings on a $0$--$10$ scale. The overall score is the equally weighted mean of the four dimensions. Best scores are in \textbf{bold}.}
  \label{tab:human-absolute}
  \small
  \setlength{\tabcolsep}{10pt}
  \renewcommand{\arraystretch}{1.1}
  \begin{tabular}{lccccc}
    \toprule
    Method & Aesthetics & Color & Motion & Detail & Overall \\
    \midrule
    CausVid        & 7.3 & 7.6 & 6.8 & 7.9 & 7.40 \\
    Self-Forcing   & 8.6 & \textbf{9.2} & 8.4 & 8.7 & 8.73 \\
    Causal-Forcing & 7.9 & 8.1 & 8.1 & 8.5 & 8.15 \\
    \textbf{TriTD} & \textbf{8.8} & \textbf{9.2} & \textbf{8.5} & \textbf{9.2} & \textbf{8.93} \\
    \bottomrule
  \end{tabular}
\end{table}

\subsection{Implementation Details}
\label{app:impl}

We build on the Wan2.1-T2V $1.3$B and $14$B models as our base
backbone and real score teacher model. Both are originally bidirectional video diffusion models. Our training pipeline consists of three stages.

\paragraph{Stage 1: Teacher ODE trajectory collection.}
To construct the supervision signal for NTD trajectory distillation,
we adopt the same ODE trajectory preparation protocol as the official
CausVid repository. Concretely, we use $6{,}000$ prompts from the
MixKit dataset and, for each prompt, run the bidirectional teacher
model for $48$ denoising steps with a time-shift of $5.0$ to obtain
a full denoising trajectory. For every prompt, we cache the complete
$49$-step trajectory (including $x_0$), including the predicted clean endpoint, so that
any intermediate teacher node can serve as a supervision target. For
the $4$-step student, we select
$\{1000.0,\,923.913,\,750.0,\,500.0\}$ out of the $48$ teacher time
steps as the student inference timesteps.

\paragraph{Stage 2: ODE regression initialization.}
We train the student with the
proposed NTD L1 trajectory regression loss under the chunk-graded
noising schedule (Clean-Start + Sorted-Noise). This stage runs for
$2{,}700$ iterations with a total batch size of $64$.

\paragraph{Stage 3: OP-DMD distillation.}
We then switch to the OP-DMD objective and perform self-rollout
training on the student. For each rollout, the exit-flag indices of all
chunks are first sampled uniformly at random and then sorted in
descending order, so that the KV-cache noise level across chunks is
monotonically non-decreasing. This stage runs for $700$ iterations with a total batch size of $64$, taking approximately two days on the $8\times$ A100 node. We adopt a sliding attention window of $12$ frames at this stage. \paragraph{Hyperparameters.}The two training stages use different optimizer settings, summarized in \Cref{tab:hparams}. Other hyperparameters are shared across
stages: teacher CFG guidance scale $3.0$, total batch size $64$, $3$ frames per chunk.
\begin{table}[h]
\centering
\small
\setlength{\tabcolsep}{8pt}
\caption{Optimizer configuration of the two training stages. Stage~3
maintains separate optimizers for the generator and the critic.}
\label{tab:hparams}
\begin{tabular}{lccc}
\toprule
Stage & Learning rate & $\beta_1$ & $\beta_2$ \\
\midrule
Stage 2: ODE regression          & $5{\times}10^{-6}$ & 0.9 & 0.999 \\
Stage 3: OP-DMD (generator)      & $2{\times}10^{-6}$ & 0.0 & 0.999 \\
Stage 3: OP-DMD (critic)         & $4{\times}10^{-7}$ & 0.0 & 0.999 \\
\bottomrule
\end{tabular}
\end{table}

\section{Long-Horizon Generalization}
\label{app:long-video}

\Cref{tab:long-video-full} reports the full 7-dimension VBench
metrics for the three chunk-wise distillation methods compared in
\Cref{sec:ablation}, evaluated on the first 128 prompts from MovieGen. Consistent with the two failure
modes described in the main text, Self-Forcing's Dynamic Degree collapses to $33.26$ (from $63.28$ at $5$s), while its remaining static frames retain a high Motion Smoothness of $98.11$, which is a signature of the near-static collapse mode. Causal-Forcing instead
suffers from simultaneous degradation in Motion Smoothness
($95.97$) and Image Quality ($64.26$), with its Quality Score
dropping to $77.01$. TriTD retains a
Dynamic Degree of $85.23$ and achieves the highest Quality Score
of $82.64$, surpassing the two baselines by $+2.29$ and $+5.63$,
respectively.

\begin{table}[H]
\centering
\caption{\textbf{Comparison on $60$-second generation}. Best score per column is
\textbf{bold}.}
\label{tab:long-video-full}
\small
\setlength{\tabcolsep}{3.5pt}
\renewcommand{\arraystretch}{1.1}
\begin{tabular}{l ccccccc c}
\toprule
Method
 & \makecell{Subject\\Cons.}
 & \makecell{Background\\Cons.}
 & \makecell{Dynamic\\Degree}
 & \makecell{Image\\Quality}
 & \makecell{Motion\\Smooth.}
 & \makecell{Aesthetic\\Quality}
 & \makecell{Temporal\\Flicker.}
 & \textbf{\makecell{Quality\\Score}} \\
\midrule
Self-Forcing    & \textbf{96.71} & \textbf{96.01} & 33.26 & \textbf{68.74} & \textbf{98.11} & \textbf{56.95} & 98.10 & 80.35 \\
Causal-Forcing  & 94.33 & 95.41 & 46.30 & 64.26 & 95.97 & 49.27 & 96.18 & 77.01 \\
\textbf{Ours}   & 94.57 & 94.89 & \textbf{85.23} & 65.44 & 97.66 & 54.69 & \textbf{98.10} & \textbf{82.64} \\
\bottomrule
\end{tabular}
\end{table}

\section{Additional Qualitative Results}
\label{app:visualization}

In this section, we provide additional qualitative comparisons between our proposed method and three representative distilled baselines: CausVid, Self Forcing, Causal Forcing. To comprehensively evaluate the generation capabilities, we select a diverse set of text prompts covering realistic human portraits, animals with complex accessories, and stylized 3D cartoon characters. 

As shown in the following visual comparisons (Figure \ref{fig:supp_helmet} to Figure \ref{fig:supp_squirrel}), existing methods often struggle with maintaining strict identity consistency and detail preservation across different frames or viewpoints. For instance, baseline methods tend to lose track of fine-grained attributes such as the shape of the camera harness (Figure \ref{fig:supp_corgi}) or the structure of the life vest (Figure \ref{fig:supp_otter}). In contrast, our approach demonstrates superior robust subject consistency, temporal coherence, and visual quality, successfully preserving the intricate details of the subjects regardless of stylistic variations or complex background dynamics.

\begin{figure*}[htpb]
    \centering
    \includegraphics[width=0.88\linewidth]{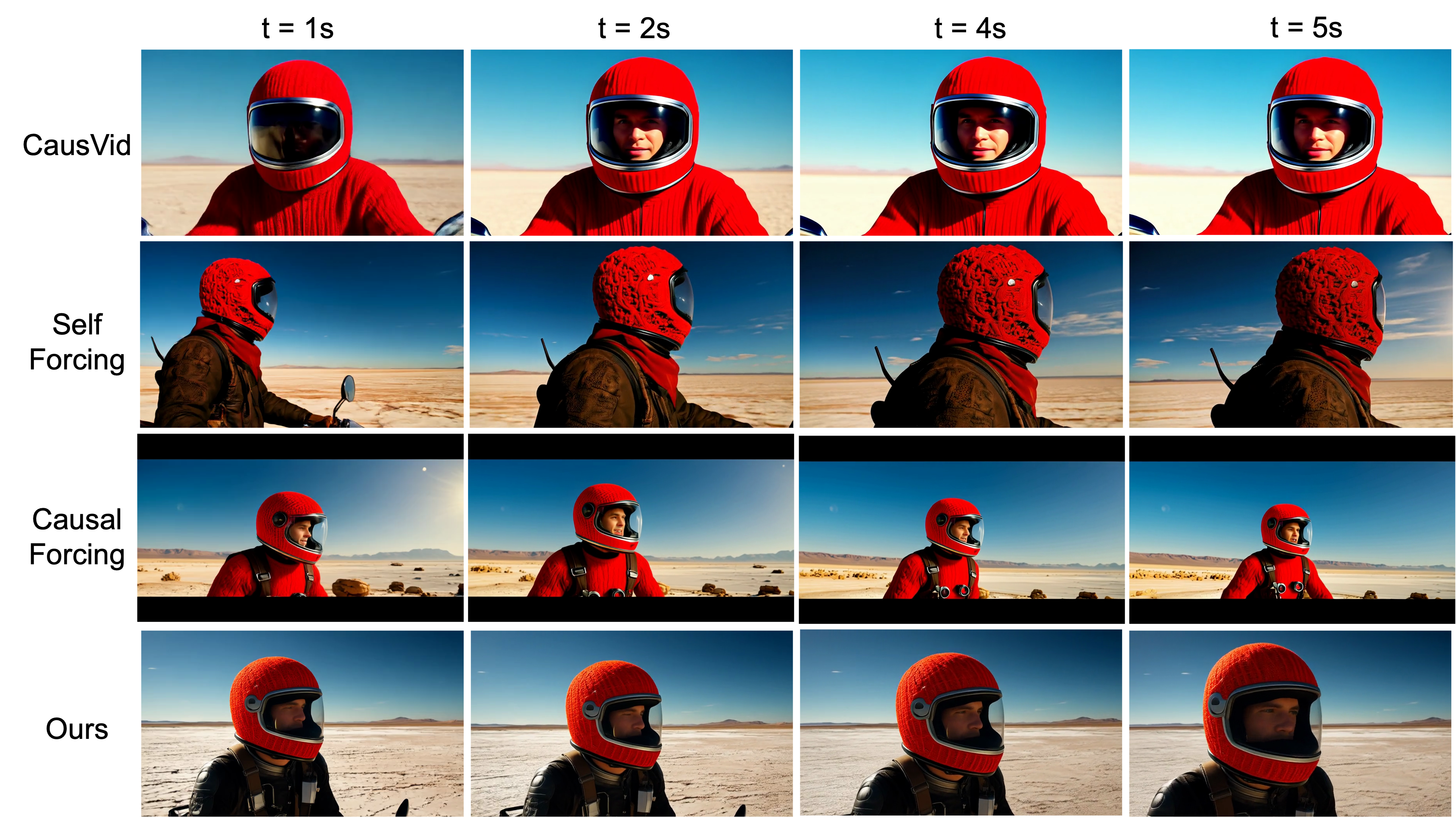}
    \caption{\textbf{Qualitative comparison on realistic human subjects.} The rows from top to bottom correspond to CausVid, Self Forcing, Causal Forcing and Ours. Our method maintains the structural integrity of the red helmet and the aesthetic quality significantly better than the baselines.}
    \label{fig:supp_helmet}
\end{figure*}

\begin{figure*}[htpb]
    \centering
    \includegraphics[width=0.88\linewidth]{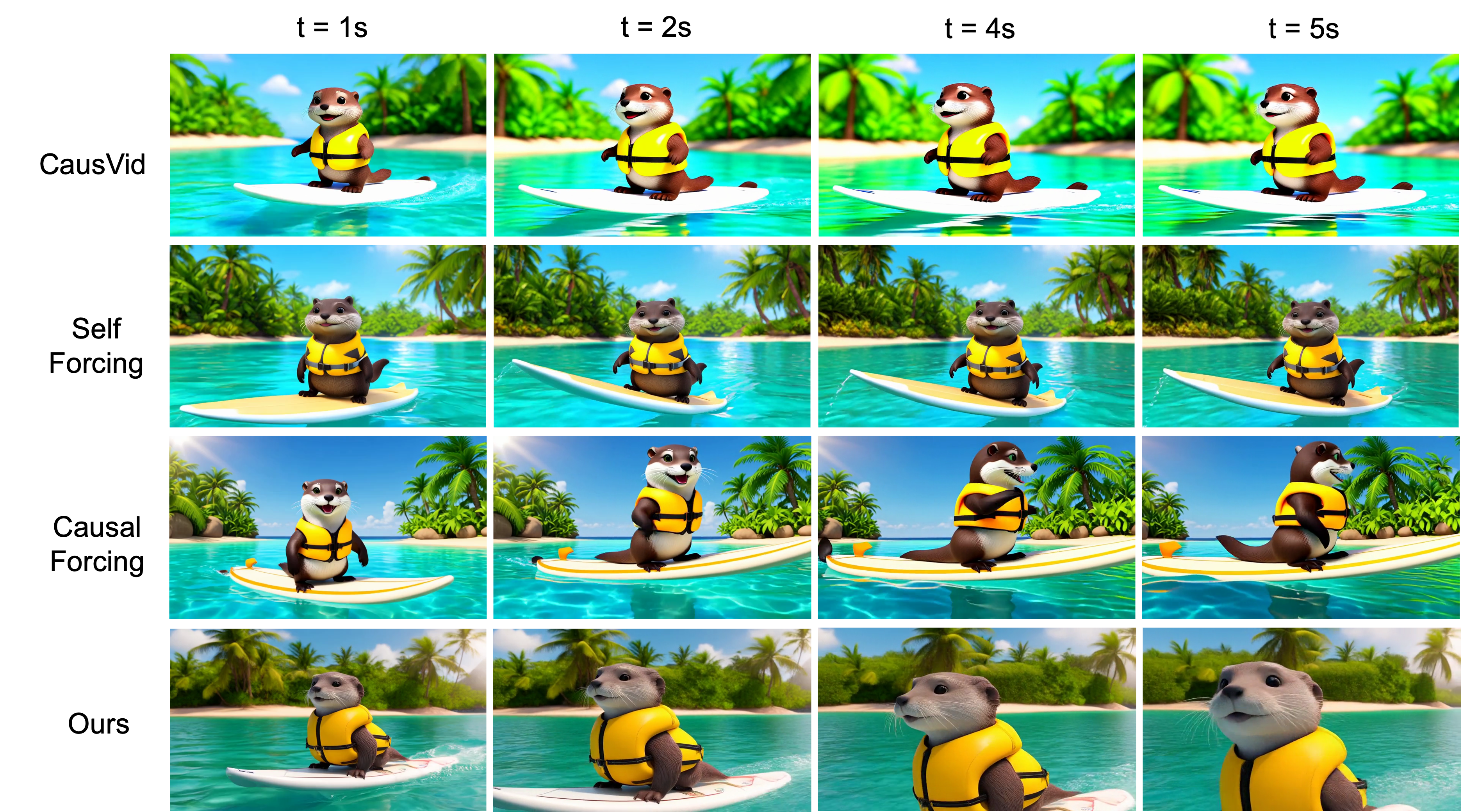}
    \caption{\textbf{Qualitative comparison on animals with accessories.} The baselines exhibit notable error accumulation, and identity drift. Our method ensures high-fidelity detail and temporal consistency.}
    \label{fig:supp_otter}
\end{figure*}

\begin{figure*}[htpb]
    \centering
    \includegraphics[width=0.88\linewidth]{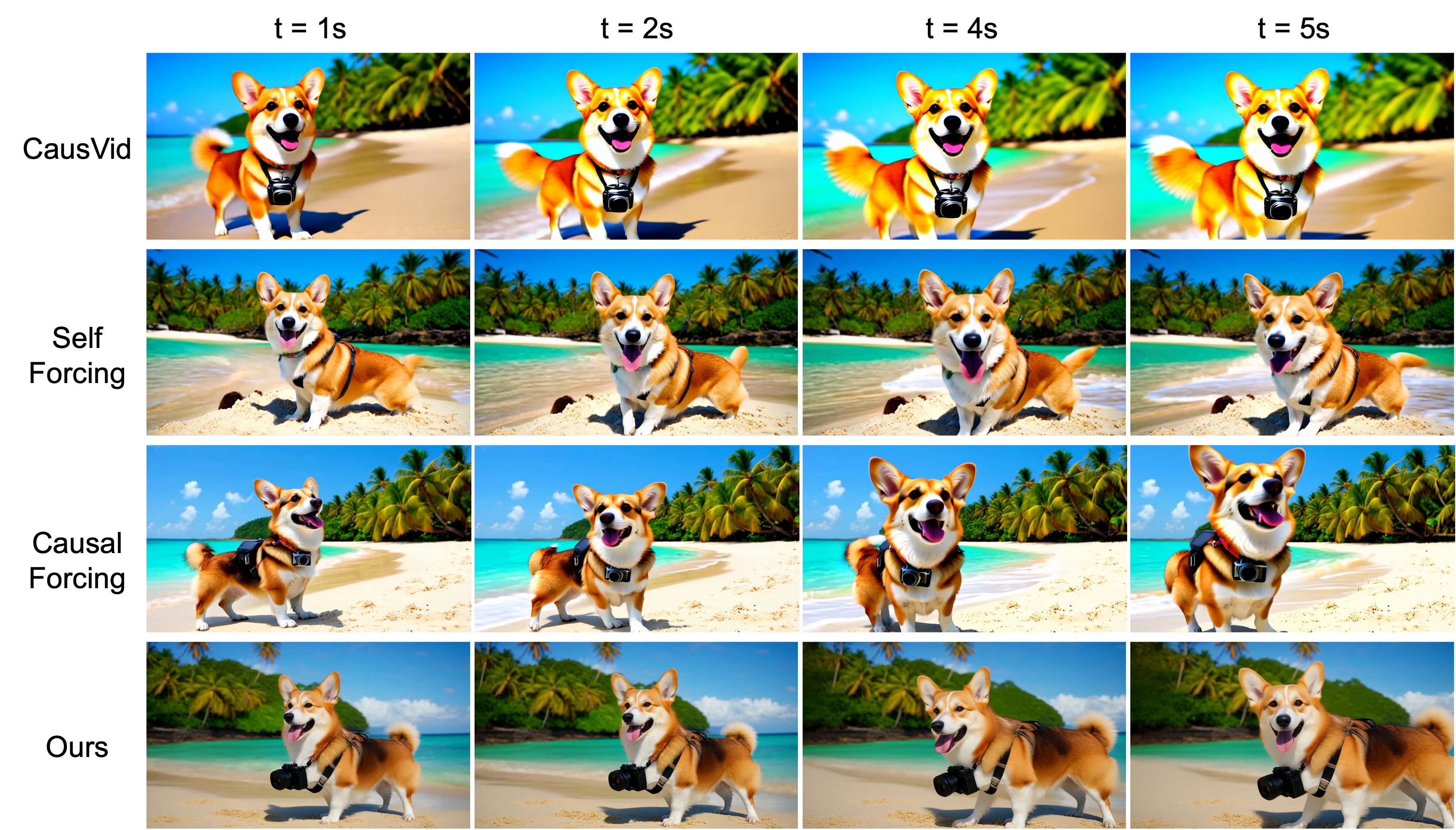}
    \caption{\textbf{Qualitative comparison on realistic animal portrait.} CausVid and Causal Forcing tend to suffer from the color oversaturation, While Self Forcing fails to generate the concept of \seedtextit{camera}. Our method shows better visual quality and prompt alignment.}
    \label{fig:supp_corgi}
\end{figure*}

\begin{figure*}[htpb]
    \centering
    \includegraphics[width=0.88\linewidth]{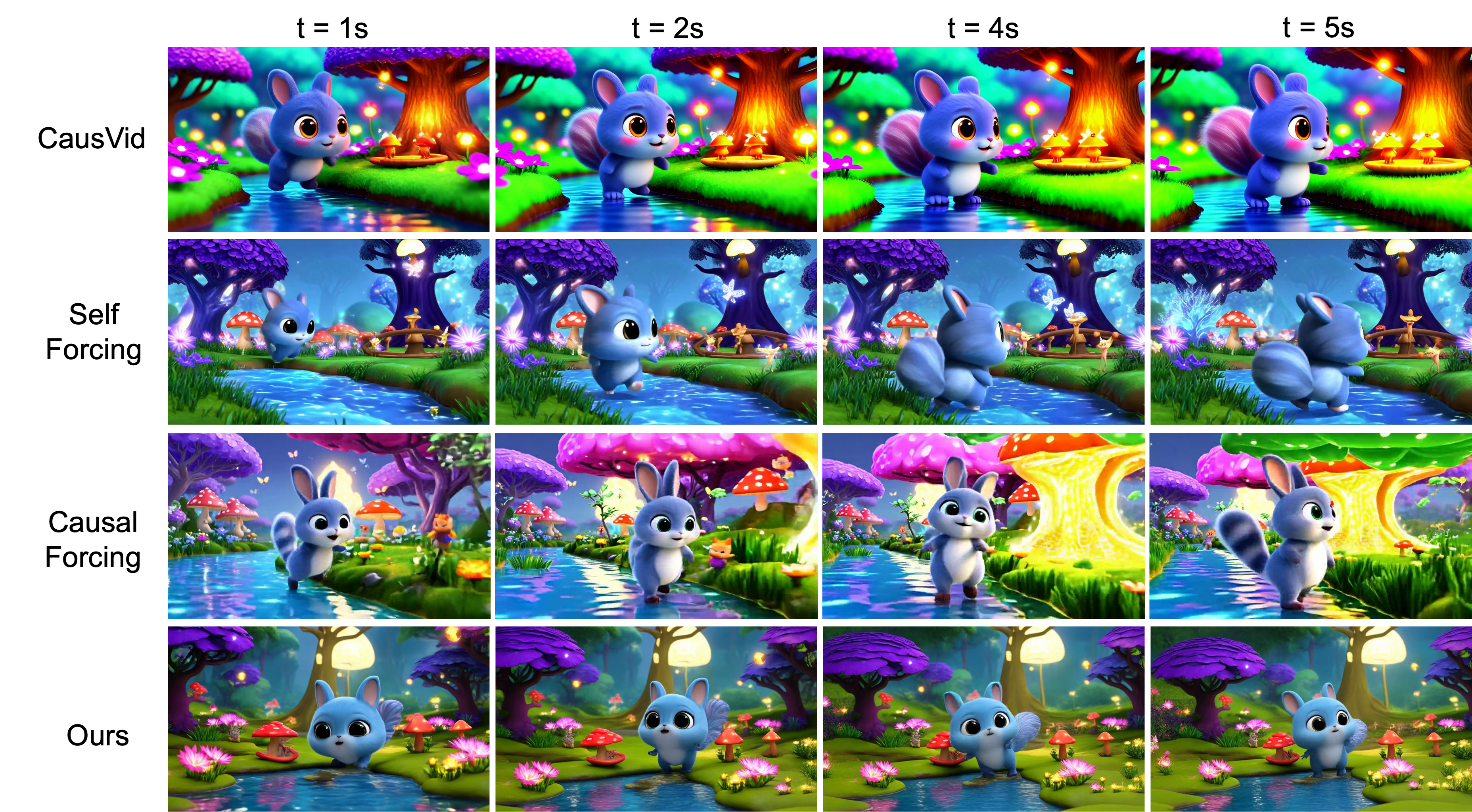}
    \caption{\textbf{Qualitative comparison on stylized 3D cartoon generation.} Our approach robustly generalizes to non-realistic/stylized domains. Baselines like CausVid and Causal Forcing show severe oversaturation and identity drift. By contrast, our method maintains consistent appearance and stable background details compared to the degraded outputs of existing methods.}
    \label{fig:supp_squirrel}
\end{figure*}


\end{document}